\newtheorem{thm}{Theorem}[section]
\newtheorem{prop}[thm]{Proposition}
\newtheorem{cor}{Corollary}
\newtheorem{defn}{Definition}[section]
\newcommand*{\Ye}[1]{$\spadesuit$\footnote{\color{blue}{Ye}: #1}}
\newcommand*{\Jonas}[1]{$\diamondsuit$\footnote{\color{purple}{Jonas}: #1}}
\def\BibTeX{{\rm B\kern-.05em{\sc i\kern-.025em b}\kern-.08em
    T\kern-.1667em\lower.7ex\hbox{E}\kern-.125emX}}
\begin{document}

\title{Paper Title*\\
{\footnotesize \textsuperscript{*}Note: Sub-titles are not captured in Xplore and
should not be used}
\thanks{Identify applicable funding agency here. If none, delete this.}
}

\title{\LARGE \bf
Towards Safe Locomotion Navigation in Partially Observable Environments with Uneven Terrain
}

\author{Jonas Warnke$^{\star}$, Abdulaziz Shamsah$^{\star}$, Yingke Li$^{\star}$, and Ye Zhao$^{*}$
\thanks{$^{*}$The authors are with the Laboratory for Intelligent Decision and Autonomous Robots, Woodruff School of Mechanical Engineering, Georgia Institute of Technology, Atlanta, GA 30313, USA
        {\tt\small \{jwarnke, ashamsah3, yli3225, yzhao301\}@gatech.edu}}%
\thanks{$^{\star}$the first three authors are equally contributed.}
}

\maketitle

\begin{abstract}
This study proposes an integrated task and motion planning method for dynamic locomotion in partially observable environments with multi-level safety guarantees. This layered planning framework is composed of a high-level symbolic task planner and a low-level phase-space motion planner. A belief abstraction at the task planning level enables belief estimation of dynamic obstacle locations and guarantees navigation safety with collision avoidance. The high-level task planner, i.e., a two-level navigation planner, employs linear temporal logic for a reactive game synthesis between the robot and its environment while incorporating low-level safe keyframe policies into formal task specification design. The synthesized task planner commands a series of locomotion actions including walking step length, step height, and heading angle changes, to the underlying keyframe decision-maker, which further determines the robot center-of-mass apex velocity keyframe. The low-level phase-space planner uses a reduced-order locomotion model to generate non-periodic trajectories meeting balancing safety criteria for straight and steering walking. These criteria are characterized by constraints on locomotion keyframe states, and are used to define keyframe transition policies via viability kernels. Simulation results of a Cassie bipedal robot designed by Agility Robotics demonstrate locomotion maneuvering in a three-dimensional, partially observable environment consisting of dynamic obstacles and uneven terrain.
\end{abstract}

%
\section{Introduction}
Safety scalable to high-dimensional robotic systems becomes imperative as legged robots maneuver over uneven and unpredictable environments, and ought to be reasoned about from various perspectives such as balance and collision avoidance.
In the robot mobility field, navigation safety is conventionally studied from the collision avoidance perspective \cite{fox1997dynamic, kousik2017safe, bajcsy2019scalable}. However, in the context of dynamic legged locomotion, maintaining dynamic balancing, i.e., avoiding a fall \cite{koolen2012capturability, heim2019beyond, luo2019robust}, 
becomes an essential safety criterion. Reasoning about safety from both levels has been largely under-explored in the field. As one closely-related line of research, Wieber's recent studies \cite{bohorquez2016safe, pajon2019safe} proposed a model predictive control (MPC) method to address safe navigation problems for bipedal walking robots in a crowded environment. Nevertheless, their work mainly focused on \textit{passive} safety, i.e., the robot comes to a stop for collision avoidance. Their MPC optimization weighs the safety criteria and lacks formal guarantees on navigation safety. To address these issues, our method takes one step towards using a symbolic planning method to design \textit{active} navigation decisions for safety guarantees, i.e., the robot steers its walking direction to avoid collisions besides the coming-to-a-stop strategy. Meanwhile, we incorporate a belief abstraction approach to assure safe navigation in a partially observable environment. 

\begin{figure}[t]
\centerline{\includegraphics[width=.46\textwidth]{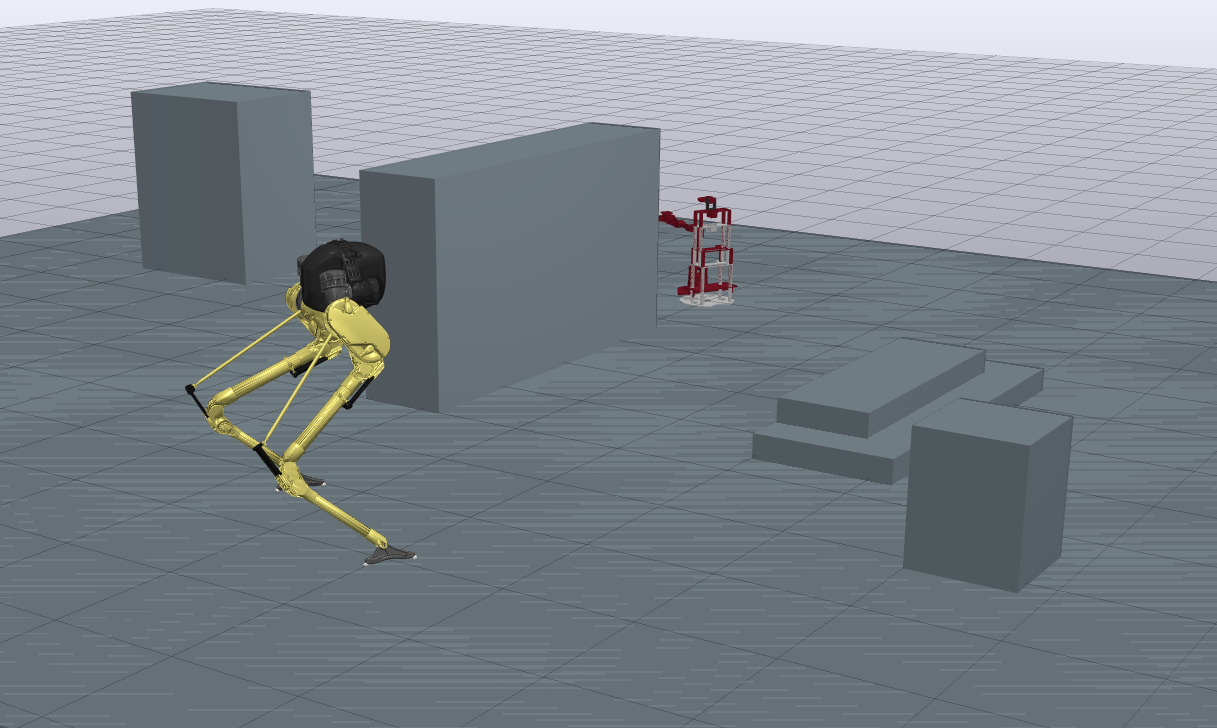}}
\caption{
An illustration of safe locomotion navigation in a crowded environment simulation with dynamic mobile robot obstacles and uneven terrain.}
\label{fig:sim}
\vspace{-0.15in}
\end{figure}



Phase-space planning (PSP) is a general planning framework for non-periodic dynamic legged locomotion over rough terrain \cite{zhao2012three, zhao2017robust}. This study further generalizes the previous PSP framework by (i) studying more complex locomotion scenarios by incorporating navigation keyframe states; (ii) avoiding collisions with dynamic obstacles in partially observable 3D environments (see Fig.~\ref{fig:sim}) (iii) taking safety criteria into account for viability kernel and keyframe policy design. Compared to other well-established locomotion planning frameworks -- Capture Point \cite{koolen2012capturability}, Divergent Component of Motion \cite{englsberger2015three}, Zero Moment Point \cite{vukobratovic2004zero} -- our framework has a large focus on providing safety guarantees for simultaneously maintaining balance and avoiding collisions in partially observable environments with rough terrain.

Temporal-logic-based motion planning has been widely studied for mobile robot navigation in partially observable domains through exploration \cite{Exploring_Partially_Known}, 
re-synthesis \cite{Temporal_hybrid_systems} when encountering unexpected obstacles, and receding-horizon planning \cite{RecHorCont}. These approaches are better suited for guaranteeing successful navigation and collision avoidance in environments with static obstacles and simple robot dynamics.
On the contrary, our framework takes into account dynamic obstacles that are only visible within a limited range. We devise a variant of the approach in \cite{bharadwaj2018synthesis} via a combined \textit{top-down} and \textit{bottom-up} strategy to design the navigation strategy in a partially observable environment while guaranteeing collision avoidance. This work generalizes our previous work on temporal-logic-based locomotion \cite{zhao2016high, zhao2018reactive, Kulgod2020LTL}.

A challenge of linear-temporal-logic-based navigation planning is to guarantee successful execution of the commands from the high-level planner in the presence of complex low-level robot dynamics. Our study explicitly addresses this challenge by encoding low-level physics-consistent safety criteria into the high-level task specification design. This strategy ensures that, on top of collision avoidance, the task planner commands actions that can be safely executed by the low-level planner. The safety properties are expressed as {\textit{viability kernel}} via viability theory \cite{aubin2009viability, wieber2008viability}. This safety-coherent hierarchy is scalable to more complex robot systems and environments through appropriate specifications.



The main contributions of this study are fourfold:
(i) design two-level safety criteria for locomotion motion planning, which guarantees the simultaneous dynamic balancing and navigation safety as well as waypoint tracking. (ii) devise a keyframe transition map based on low-level motion planer constraints and design a keyframe policy for locomotion safe navigation.
(iii) employ a belief abstraction method for a reactive navigation game
to expand navigation choices in a partially observable environment.
(iv) design a hierarchical planning structure that integrates safety for the high-level task planner and low-level motion planner cohesively.



    







\begin{figure}[t]
\centerline{\includegraphics[width=.45\textwidth]{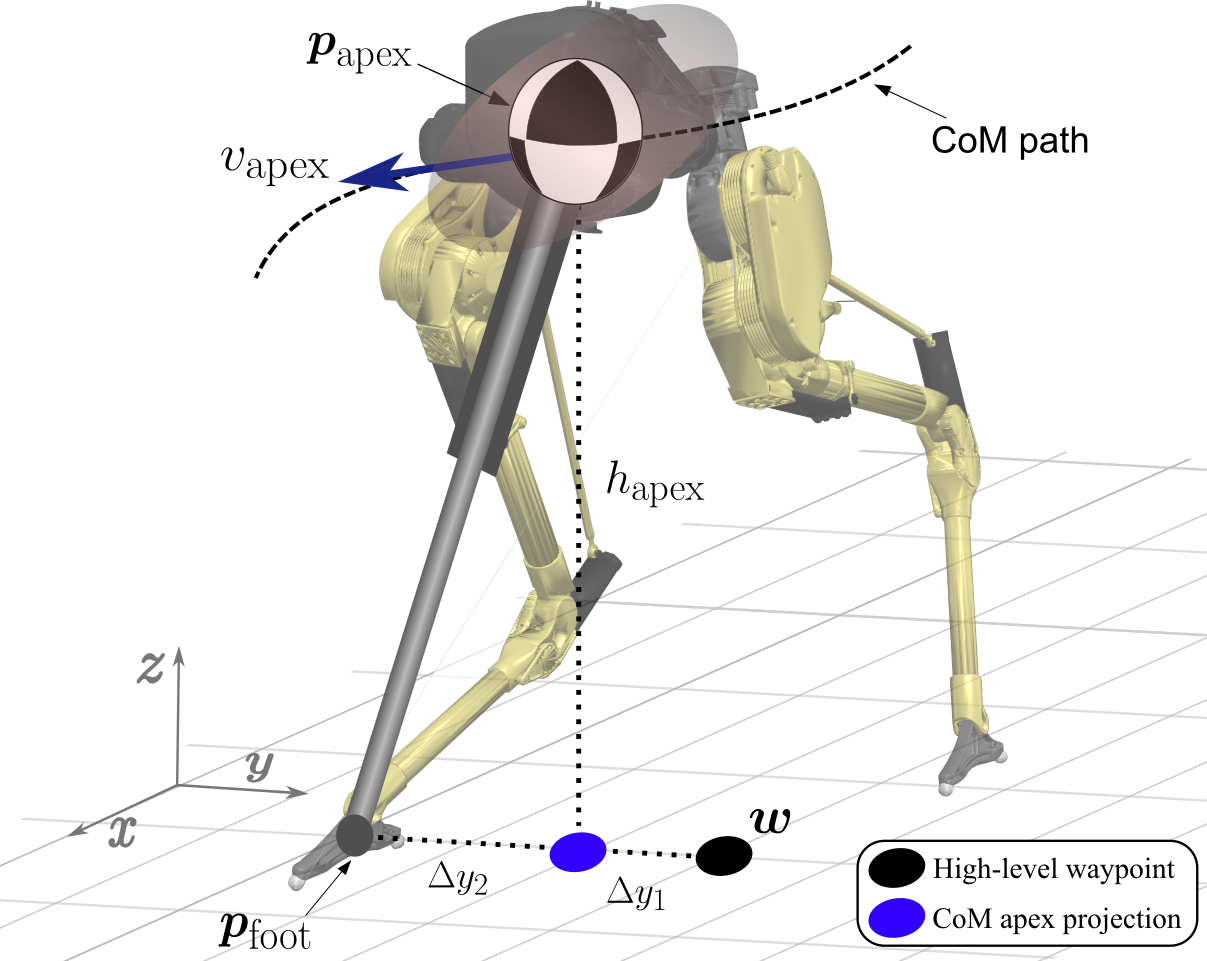}}
\caption{Reduced-order modeling of Cassie robot as a 3D prismatic inverted pendulum model with all of its mass concentrated on its CoM and a telescopic leg to comply to the varying CoM height. $\Delta y_1$ is the relative lateral distance between lateral CoM apex position and the high-level waypoint $\boldsymbol{w}$, and $\Delta y_2$ is the lateral distance between the CoM lateral apex position and the lateral foot placement.}
\label{fig:notation}
\vspace{-0.15in}
\end{figure}

\section{Safe Locomotion Planning}
\label{sec:motion planner}
This section will introduce a locomotion planner based on a reduced-order model and then propose safety locomotion criteria for different walking scenarios. The reduced-order model refers to the dynamics of the prismatic inverted pendulum model \cite{zhao2017robust} in our study, and is used to derive an analytical solution for the robot phase-space trajectories.

\subsection{Reduced-order Locomotion Planning}

This subsection first introduces mathematical notations of our reduced-order model. As shown in Fig. \ref{fig:notation}, 
the center-of-mass (CoM) position $\boldsymbol{p} = (x, y, z)^T$ is composed of the sagittal, lateral, and vertical positions. We denote the apex position as $\boldsymbol{p}_{{\rm apex}}=(x_{{\rm apex}},y_{{\rm apex}}, z_{{\rm apex}})^T$, the foot placement as $\boldsymbol{p}_{{\rm foot}}=(x_{{\rm foot}},y_{{\rm foot}}, z_{{\rm foot}})^T$, and $h_{\rm apex}$ is the relative apex CoM height with respect to the stance foot height. $v_{{\rm apex}}$ denotes the CoM velocity at $\boldsymbol{p}_{{\rm apex}}$. $\Delta y_1$ is the relative lateral distance between lateral apex position and the high-level waypoint $\boldsymbol{w}$. $\Delta y_2 :=y_{{\rm apex}}-y_{{\rm foot}}$ is defined to be the lateral distance between the CoM lateral apex position and the lateral foot placement. This parameter will be used to determine the allowable steering angle in Section~\ref{subsec:sc}.

\begin{figure}[t]
\centerline{\includegraphics[width=.5\textwidth]{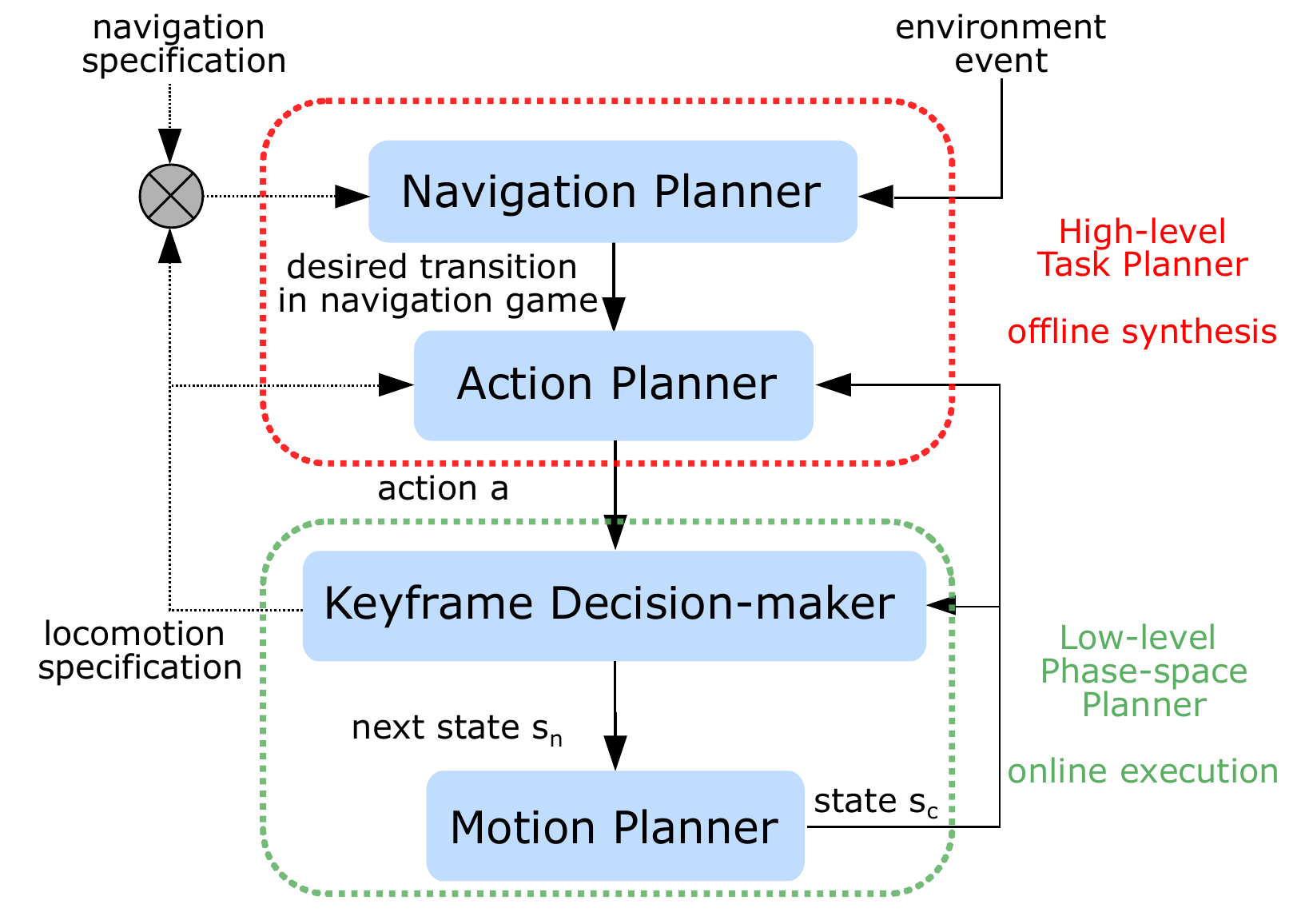}}
\caption{Block diagram of the proposed locomotion planning framework. The task planner employs a linear temporal logic approach to synthesize actions. At the low-level, the keyframe decision-maker generates the keyframe states sent to the motion planner. Locomotion specifications from the low-level will be incorporated into the task planner. Details of the state and action is introduced in Definition~\ref{def:keyframe}. More discussions will be in Section~\ref{sec:results}.}
\label{fig:frame}
\vspace{-0.15in}
\end{figure}

Phase space planning is a keyframe-based non-periodic planning method for dynamic locomotion \cite{zhao2017robust}. 
Our study generalizes the keyframe definition in our previous work by introducing diverse navigation actions in 3D environments. 

\begin{figure*}[th]
\centerline{\includegraphics[width=0.97\textwidth]{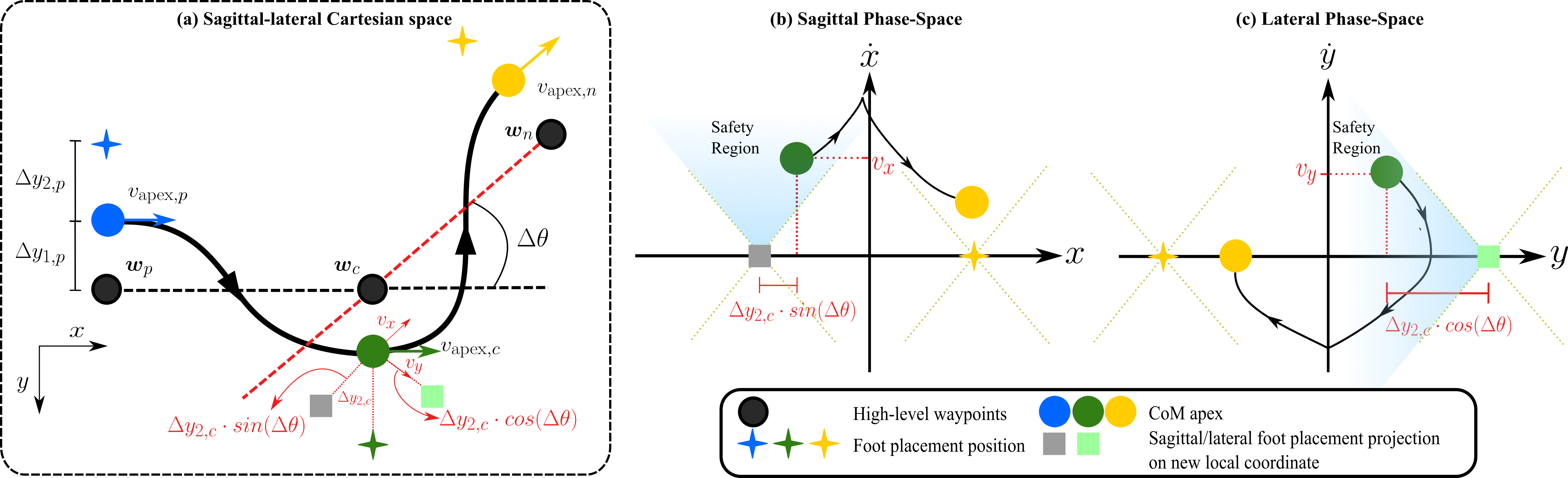}}
\caption{Phase-space safety region for steering walking: (a) shows three consecutive keyframes with a heading angle change ($\Delta \theta$) between the current keyframe and the next keyframe. The CoM sagittal-lateral geometric trajectory is represented by the solid thick black line. The direction change introduces a new local coordinate shown in red dashed line. Subfigures (b) and (c) show the sagittal and lateral phase-space plots respectively, both satisfying the safety criteria proposed in Proposition \ref{prop:steering1}. The subscripts $p$, $c$ and $n$ denote the previous, current, and next walking steps, respectively.}
\label{fig:steering_safety}
\vspace{-0.15in}
\end{figure*}

\begin{defn}[Locomotion Keyframe State]\label{def:keyframe}
A keyframe state of our reduced-order model is defined as $\boldsymbol{k} = (d, \Delta\theta, \Delta z_{\rm foot}$, $i_{\rm st}, c_{\rm stop}, v_{\rm apex}, z_{\rm apex}$) $ \in \mathcal{K}$, where 
\begin{itemize}
    \item $d := x_{{\rm apex},n}-x_{{\rm apex},c}$ is the walking step length~\footnote{while in straight walking $d$ represents the step length, the step length during steering walking is adjusted to reach the next waypoint on the new local coordinate.};
    \item $\Delta \theta:=\theta_{v_{{\rm apex},n}}-\theta_{v_{{\rm apex},c}}$ is the heading angle change at two consecutive CoM apex states;
    \item $\Delta z_{\rm foot} := z_{{\rm foot},n}-z_{{\rm foot},c}$ is the height change for successive foot placements;
    \item $i_{\rm st}$ is the desired stance foot index;
    \item $c_{\rm stop}$ is a boolean informing the motion planner to stop (\textsf{stop}) at the next keyframe;
    \item $v_{\rm apex}$ is the CoM sagittal apex velocity;
    \item $z_{\rm apex}$ is the apex CoM height with respect to the absolute zero height reference, selected as the level ground height in this study.
\end{itemize}
\end{defn}
The parameters $d$, $\Delta\theta$, $\Delta z_{\rm foot}$, $i_{\rm st}$, and $c_{\rm stop}$ are determined by the navigation policy that will be designed in the task planning section. These six parameters are defined as the action, i.e., $\boldsymbol{a} = (d, \Delta\theta, \Delta z_{\rm foot}, i_{\rm st}, c_{\rm stop}) \in \mathcal{A}$. We represent the parameters $d$, $\Delta\theta$, and $\Delta z_{\rm foot}$ in the cartesian space with the high-level waypoints $\boldsymbol{w}$.
On the other hand, the state is $\boldsymbol{s} = (v_{\rm apex}, z_{\rm apex}) \in \mathcal{S}$. 
The state and action will be used to define a keyframe transition map in Section~\ref{sec:interface}.

When the CoM motion is constrained within a piece-wise linear surface parameterized by $h = k(x -x_{{\rm foot}})+h_{\rm apex}$, where $h$ denotes the CoM height from the stance foot, the reduced-order model becomes linear and an analytical solution exists: 
\begin{equation}\label{eqn:analyticalsolution}
    \dot{p}_{\rm com} = \pm \sqrt{\omega^{2}((p_{\rm com} -p_{\rm foot})^{2}-(p_{0}-p_{\rm foot})^{2})+\dot{p}_{0}^{2}}
\end{equation}
where the asymptote slope $\omega = \sqrt{g/h_{\rm apex}}$. Note that Eq. (\ref{eqn:analyticalsolution}) holds for both sagittal and lateral directions, i.e., $p_{\rm com} \in \{x, y\}$. The initial condition $(p_0, \dot{p}_0)$ is chosen as the CoM apex state. Detailed derivations are elaborated in the Appendix.

\subsection{Safe Locomotion Criteria}
\label{subsec:sc}
Avoiding a fall is an essential capability of dynamic legged locomotion. Numerous studies have been proposed to quantify locomotion safety and design recovery controllers \cite{heim2019beyond, stephens2007humanoid}. Before proposing safety locomotion criteria, let us first define locomotion balancing safety.

\begin{defn}[Balancing Safety]
The balancing safety region $\mathcal{R}_s$ for one locomotion step is defined as the set of viable keyframe states $\boldsymbol{k} \in \mathcal{K}$ such that the robot maintains its balance, i.e., avoids a fall.
\label{def:balance_saf}
\end{defn}
Note that, the keyframe state $\boldsymbol{k}$ includes the action $\boldsymbol{a}$ so the control is implicit in the balancing safety region. 

%
The balancing safety region $\mathcal{R}_s$ requires satisfying multiple safety criteria that will be proposed in Propositions~\ref{prop:steering1}-\ref{prop:straight}. We will delve into safety criteria for both straight and steering walking.
As a general principle of balancing safety, the sagittal CoM position should be able to cross the sagittal apex with a positive CoM velocity while the lateral CoM velocity should be able to reach the zero lateral velocity threshold at the next apex state. Ruling out the fall situations provides us upper and lower bounds of the balancing safety region. 
The safety criteria are proposed as follows.
\begin{prop}\label{prop:steering1}
For steering walking, 
the current sagittal CoM apex velocity $v_{{\rm apex},c}$ in the original local coordinate is bounded by
\begin{equation}
    \Delta y_{2,c} \cdot \omega \cdot \tan{\Delta\theta} \leq v_{{\rm apex},c} \leq \frac{\Delta y_{2,c} \cdot \omega}{\tan{\Delta\theta}}
    \vspace{0.05in}
\end{equation}
\end{prop}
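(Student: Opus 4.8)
The plan is to work in the rotated local coordinate frame introduced by the heading change $\Delta\theta$ (the red dashed axes in Fig.~\ref{fig:steering_safety}(a)) and translate the two qualitative balancing requirements stated just before the proposition --- the sagittal CoM must cross the new sagittal apex with positive velocity, and the lateral CoM must be able to decelerate to zero lateral velocity at the next apex --- into algebraic inequalities on $v_{{\rm apex},c}$. The key geometric fact is that when the walking direction rotates by $\Delta\theta$, the velocity vector that was purely sagittal in the old frame (magnitude $v_{{\rm apex},c}$) decomposes in the new frame into a sagittal component $v_{{\rm apex},c}\cos\Delta\theta$ and a lateral component $v_{{\rm apex},c}\sin\Delta\theta$; similarly the lateral offset $\Delta y_{2,c}$ relative to the stance foot picks up contributions under the rotation. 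I would first write these decompositions explicitly, then substitute them into the analytical phase-space solution~\eqref{eqn:analyticalsolution} applied separately to the sagittal and lateral directions in the new frame.

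Next I would handle the two bounds separately. For the \emph{upper} bound, the binding constraint is lateral viability: using \eqref{eqn:analyticalsolution} in the lateral direction with initial data at the current apex, the condition that the lateral velocity can still reach zero before the CoM passes the stance foot (i.e.\ the lateral motion stays on the stable side of the lateral asymptote) gives an inequality of the form $\dot{y}_{\rm apex}^2 \le \omega^2 \Delta y_{2,c}^2$ in the new frame; substituting $\dot{y}_{\rm apex} = v_{{\rm apex},c}\sin\Delta\theta$ (plus the appropriate geometric bookkeeping of $\Delta y_2$ under rotation) and simplifying yields $v_{{\rm apex},c} \le \Delta y_{2,c}\,\omega/\tan\Delta\theta$. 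For the \emph{lower} bound, the binding constraint is that the \emph{sagittal} motion in the new frame must still be divergent-enough to carry the CoM across the next apex --- equivalently the new-frame sagittal velocity component must dominate the portion of the old sagittal velocity that was "rotated away" into the lateral channel; carrying out the same substitution in the sagittal copy of \eqref{eqn:analyticalsolution} and rearranging produces $v_{{\rm apex},c} \ge \Delta y_{2,c}\,\omega\tan\Delta\theta$. Combining the two gives the stated sandwich.

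The main obstacle I anticipate is the coordinate bookkeeping rather than any deep analysis: one has to be careful about which apex ($c$ vs.\ $n$) each quantity is evaluated at, how $\Delta y_2$ transforms when the local frame rotates (it is defined as $y_{\rm apex}-y_{\rm foot}$, so both endpoints move), and the sign conventions for $\Delta\theta$ so that $\tan\Delta\theta>0$ and the interval is nonempty --- implicitly this also requires $|\Delta\theta|$ small enough that $\tan\Delta\theta \le 1$, i.e.\ $\Delta\theta \le \pi/4$, which should be noted as a standing assumption. A secondary subtlety is justifying that these two extremal conditions are exactly the active ones (that no other fall mode gives a tighter bound); I would argue this by appealing to Definition~\ref{def:balance_saf} and the monotonic structure of the hyperbolic solution in \eqref{eqn:analyticalsolution}, so that the sagittal-crossing and lateral-reversal conditions are respectively the unique lower- and upper-active constraints. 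Once the trigonometric decomposition is set up correctly, the remaining steps are routine algebraic manipulation of \eqref{eqn:analyticalsolution}.
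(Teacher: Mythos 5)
Your proposal is correct and follows essentially the same route the paper intends: the heading change makes the current apex a non-apex state in the rotated frame, and requiring the rotated sagittal and lateral phase-space states to stay on the correct sides of the $\pm\omega$ asymptotes of Eq.~(\ref{eqn:analyticalsolution}) --- with $\dot x = v_{{\rm apex},c}\cos\Delta\theta$, $\dot y = v_{{\rm apex},c}\sin\Delta\theta$, $x-x_{\rm foot}=\Delta y_{2,c}\sin\Delta\theta$, $y-y_{\rm foot}=\Delta y_{2,c}\cos\Delta\theta$ --- yields exactly the stated lower and upper bounds. The paper itself offers no more than this sketch (the proposition is justified only by the paragraph following it and Fig.~\ref{fig:steering_safety}), so your write-up, including the nonemptiness condition $|\Delta\theta|\le 45^{\circ}$, is if anything more explicit than the original; only your parenthetical gloss on the lower bound (``dominate the velocity rotated into the lateral channel'') is loose, but the computation you actually prescribe is the right one.
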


A fall will occur when $v_{{\rm apex},c}$ is out of this safety range such that either the lateral CoM velocity cannot reach zero at the next apex state or the sagittal CoM can not climb over the next sagittal CoM apex. Fig.~\ref{fig:steering_safety} shows a steering walking trajectory and phase-space plot that satisfy Proposition~\ref{prop:steering1}. Namely, the CoM in the sagittal and lateral phase-space should not cross the asymptote line of the shaded safety region as seen in Fig.~\ref{fig:steering_safety}. This criterion is specific to steering walking, as the heading change ($\Delta \theta$) introduces a new local frame, which yields the current state $\boldsymbol{s}_c$ to no longer be an apex state in the new coordinate. As such, it has non-apex sagittal and lateral components, i.e., $v_{y,c} \neq 0$, and $x_{\rm{apex},c} \neq x_{\rm{foot},c}$.
%
Next, we study the constraints between apex velocities of two consecutive walking steps and propose the following proposition and corollaries. 
%


\begin{prop}\label{prop:straight}
For straight walking, given $d$ and $\omega$, the apex velocity for two consecutive walking steps ought to satisfy the following velocity constraint:
\begin{equation}\label{eq:straight}
    -\omega^{2}d^{2} \le v_{{\rm apex},n}^{2}-v_{{\rm apex},c}^{2} \le \omega^{2}d^{2}
\end{equation}
where $d^2 = (x_{{\rm apex},n}-x_{{\rm apex},c})(x_{{\rm apex},c}+x_{{\rm apex},n}-2x_{{\rm foot},c})$.
\end{prop}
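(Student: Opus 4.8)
The plan is to work directly from the analytical phase-space solution in Eq.~\eqref{eqn:analyticalsolution} applied to the sagittal direction, and to translate the "general principle of balancing safety" — the sagittal CoM must cross the next apex with nonnegative velocity — into an inequality relating $v_{{\rm apex},c}$ and $v_{{\rm apex},n}$. First I would set up the sagittal dynamics over the current walking step: using $p_{\rm com}=x$, $p_{\rm foot}=x_{{\rm foot},c}$, and initial condition $(p_0,\dot p_0)=(x_{{\rm apex},c}, v_{{\rm apex},c})$, Eq.~\eqref{eqn:analyticalsolution} gives the sagittal velocity at an arbitrary $x$ as $\dot x^2 = \omega^2\big((x-x_{{\rm foot},c})^2-(x_{{\rm apex},c}-x_{{\rm foot},c})^2\big)+v_{{\rm apex},c}^2$. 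Evaluating this at $x = x_{{\rm apex},n}$ (the sagittal position of the next apex, still measured in the current stance frame) yields the squared velocity the CoM carries into the next apex, and by definition of the next keyframe that value equals $v_{{\rm apex},n}^2$.

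Next I would expand the difference of squares: $(x_{{\rm apex},n}-x_{{\rm foot},c})^2-(x_{{\rm apex},c}-x_{{\rm foot},c})^2 = (x_{{\rm apex},n}-x_{{\rm apex},c})(x_{{\rm apex},n}+x_{{\rm apex},c}-2x_{{\rm foot},c})$, which is exactly the quantity denoted $d^2$ in the statement. Substituting back gives the clean identity $v_{{\rm apex},n}^2 - v_{{\rm apex},c}^2 = \omega^2 d^2$ along a single step's trajectory — but this is an equality, not the claimed two-sided inequality, so there is still work to do.

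The remaining step, and the conceptual crux, is to explain why $d^2$ in the \emph{constraint} is allowed to range while the along-trajectory relation is an equality: the bound $|v_{{\rm apex},n}^2-v_{{\rm apex},c}^2|\le\omega^2 d^2$ should be read as a feasibility condition on which $v_{{\rm apex},n}$ can be reached given a \emph{commanded} step length $d$, where the actual foot placement $x_{{\rm foot},c}$ (and hence the effective swing of the CoM relative to the stance foot) is the free design variable. I would argue that shifting $x_{{\rm foot},c}$ continuously interpolates the right-hand side between $-\omega^2 d^2$ and $+\omega^2 d^2$: placing the foot so the CoM decelerates toward the apex gives the lower bound (CoM must still arrive with $\dot x\ge 0$, i.e. it cannot lose more than $\omega^2 d^2$ of squared velocity before reaching $x_{{\rm apex},n}$), and placing it so the CoM accelerates gives the upper bound. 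The degenerate cases — arriving at the next apex with exactly zero velocity on the low side, and the symmetric maximal-acceleration case on the high side — pin the two endpoints, and monotonicity of $\dot x^2$ as a function of the foot-placement parameter fills in between.

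I expect the main obstacle to be making this last argument rigorous rather than heuristic: precisely identifying the admissible range of $x_{{\rm foot},c}$ (kinematic leg-length and step-geometry limits, the requirement that the current step itself be viable in the sense of Proposition~\ref{prop:steering1} when applicable, and the sign conventions that keep $d^2\ge 0$), and verifying that $v_{{\rm apex},n}^2-v_{{\rm apex},c}^2$ is indeed continuous and monotone in that parameter so that the extreme values are exactly $\pm\omega^2 d^2$. The pure phase-space algebra leading to the equality is routine; the packaging of that equality as the stated feasibility interval is where the real content lies.
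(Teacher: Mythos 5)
Your opening computation is fine and in fact reproduces one endpoint of the paper's bound: applying Eq.~(\ref{eqn:analyticalsolution}) about the current stance foot from $x_{{\rm apex},c}$ to $x_{{\rm apex},n}$ and factoring the difference of squares gives exactly $\omega^{2}(x_{{\rm apex},n}-x_{{\rm apex},c})(x_{{\rm apex},c}+x_{{\rm apex},n}-2x_{{\rm foot},c})=\omega^{2}d^{2}$. But you correctly sense that this yields an equality, and the mechanism you then invoke to open it up into a two-sided interval --- treating the foot placement $x_{{\rm foot},c}$ as a free design variable that slides the right-hand side between $-\omega^{2}d^{2}$ and $+\omega^{2}d^{2}$ --- is not the right one, and this is a genuine gap. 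The missing idea is that a walking step involves \emph{two} support phases: the CoM obeys the pendulum dynamics about $x_{{\rm foot},c}$ only up to the contact-switching position $x_{\rm switch}$, and about $x_{{\rm foot},n}$ thereafter. The paper computes $x_{\rm switch}$ from velocity continuity at the switch (this is where the constant $C$ and the formula $x_{\rm switch}=\tfrac{1}{2}(C/(x_{{\rm foot},n}-x_{{\rm foot},c})+x_{{\rm foot},c}+x_{{\rm foot},n})$ come from) and then imposes the physical requirement $x_{{\rm apex},c}\le x_{\rm switch}\le x_{{\rm apex},n}$. The free parameter generating the interval is the switch location, not the foot location: the case $x_{\rm switch}=x_{{\rm apex},n}$ (whole step on the current foot) gives the upper bound $\omega^{2}(x_{{\rm apex},n}-x_{{\rm apex},c})(x_{{\rm apex},c}+x_{{\rm apex},n}-2x_{{\rm foot},c})$, which is your equality, while $x_{\rm switch}=x_{{\rm apex},c}$ (whole step on the next foot) gives the lower bound with $x_{{\rm foot},n}$ in place of $x_{{\rm foot},c}$; since at a straight-walking apex the CoM sits over the stance foot, these two expressions reduce to $+\omega^{2}d^{2}$ and $-\omega^{2}d^{2}$ respectively.

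Concretely, your statement that the velocity Eq.~(\ref{eqn:analyticalsolution}) evaluated at $x=x_{{\rm apex},n}$ ``by definition of the next keyframe equals $v_{{\rm apex},n}^{2}$'' is only true in the degenerate case where the foot switch happens exactly at the next apex; for any interior switch the trajectory changes its governing pendulum partway, and the net change in $v_{\rm apex}^{2}$ interpolates monotonically between the two extremes as $x_{\rm switch}$ sweeps the interval. If you replace your ``foot placement as free variable'' step with the switching-position argument, the rest of your algebra goes through and you recover the paper's proof.
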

\begin{proof}
First, the sagittal switching position can be obtained from the analytical solution in Eq.~(\ref{eqn:analyticalsolution}):
\begin{equation}
    x_{\rm switch} = \frac{1}{2}(\frac{C}{x_{{\rm foot}, n}-x_{{\rm foot}, c}}+(x_{{\rm foot}, c}+x_{{\rm foot}, n}))
\end{equation}
where $C = (x_{{\rm apex},c}-x_{{\rm foot},c})^{2}-(x_{{\rm apex},n}-x_{{\rm foot}, n})^{2} +(\dot{x}_{{\rm apex},n}^{2}-\dot{x}_{{\rm apex},c}^{2})/\omega^{2}$. This walking step switching position is required to stay between the two consecutive CoM apex positions, i.e., 
\begin{equation}
    x_{{\rm apex},c} \le x_{\rm switch} \le x_{{\rm apex},n}
\end{equation}
which introduces the sagittal apex velocity constraints for two consecutive keyframes as follows.
\begin{equation}
\begin{split}
    \omega^{2}(x_{{\rm apex},n}-x_{{\rm apex},c})&(x_{{\rm apex},c}+x_{{\rm apex},n}-2x_{{\rm foot},n}) \\
    \le \; \dot{x}_{{\rm apex},n}^{2} & -\dot{x}_{{\rm apex},c}^{2} \le  \\
    \omega^{2}(x_{{\rm apex},n}-x_{{\rm apex},c})&(x_{{\rm apex},c}+x_{{\rm apex},n}-2x_{{\rm foot},c})
\end{split}
\end{equation}

Given this bounded difference between two consecutive CoM apex velocity squares, the corresponding safe criterion for straight walking can be expressed as Eq.~(\ref{eq:straight}).
\end{proof}

\begin{cor}
For steering walking in Proposition~\ref{prop:steering1}, given $d$, $\Delta \theta$, $\Delta y_{2,c}$ and $\omega$, two consecutive apex velocities ought to satisfy the following velocity constraint:
\begin{equation}\label{eq:steering1}
    -\omega^{2}d^{2} \le v_{{\rm apex},n}^{2}-(v_{{\rm apex},c}\cos{\Delta\theta})^{2} \le \omega^{2}d^2_{+}
\end{equation}
where $d^2_{+} = d^{2}+2\Delta y_{2,c}d\sin{\Delta\theta}$.
\end{cor}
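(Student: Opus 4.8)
The plan is to reuse the step-switching-position argument from the proof of Proposition~\ref{prop:straight} essentially verbatim, but carried out in the rotated local coordinate frame that the heading change $\Delta\theta$ introduces, rather than in the original frame. The one conceptual input is the remark already made below Proposition~\ref{prop:steering1}: the current keyframe, a genuine apex in the old frame (zero lateral velocity, sagittal velocity $v_{{\rm apex},c}$), is \emph{not} an apex in the new frame. Rotating the apex velocity vector $(v_{{\rm apex},c},0)$ by $\Delta\theta$ produces a new-frame sagittal component $v_{{\rm apex},c}\cos\Delta\theta$ (together with a nonzero lateral component $-v_{{\rm apex},c}\sin\Delta\theta$). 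This is exactly why $v_{{\rm apex},c}$ enters Eq.~(\ref{eq:steering1}) through $v_{{\rm apex},c}\cos\Delta\theta$, while $v_{{\rm apex},n}$ --- still a true apex in the new frame --- appears unchanged.

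Concretely I would proceed as follows. First, express the current CoM position and velocity in the new local frame by applying the planar rotation by $\Delta\theta$ that defines that frame; here the old-frame lateral offset $\Delta y_{2,c}=y_{{\rm apex},c}-y_{{\rm foot},c}$ feeds into the new-frame \emph{sagittal} coordinates of both the current CoM and the current stance foot, and $d$ is read off as $x_{{\rm apex},n}-x_{c}$ in the new frame (per the adjusted-step-length footnote in Definition~\ref{def:keyframe}). Second, substitute these new-frame quantities into the sagittal analytical solution~(\ref{eqn:analyticalsolution}) and solve for $x_{{\rm switch}}$ exactly as in the proof of Proposition~\ref{prop:straight}, with $\dot{x}_{{\rm apex},c}$ replaced by $v_{{\rm apex},c}\cos\Delta\theta$ and $\dot{x}_{{\rm apex},n}$ left as $v_{{\rm apex},n}$. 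Third, impose that the switch occurs between the current and next sagittal positions in the new frame, $x_{c}\le x_{{\rm switch}}\le x_{{\rm apex},n}$, which turns into a two-sided bound on $v_{{\rm apex},n}^{2}-(v_{{\rm apex},c}\cos\Delta\theta)^{2}$. Fourth, expand the geometric product $d\cdot(x_{c}+x_{{\rm apex},n}-2x_{{\rm foot},c})$ and its $x_{{\rm foot},n}$ counterpart in the new frame: the pure-sagittal part reproduces $d^{2}$, while the cross term between the step length $d$ and the rotated lateral offset yields exactly $2\Delta y_{2,c}d\sin\Delta\theta$, giving $d^{2}_{+}$ on the upper bound; the lower bound collapses to $-\omega^{2}d^{2}$ because the next keyframe is again a bona fide apex in the new frame.

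The main obstacle is the coordinate-transformation bookkeeping rather than any deep idea: one must track how $\Delta y_{2,c}$, a lateral quantity in the old frame, migrates into the new-frame sagittal positions of the current apex and current foot, and keep the signs consistent with the conventions for $\Delta\theta$ and the stance-foot index $i_{\rm st}$, so that the cross term lands as $+2\Delta y_{2,c}d\sin\Delta\theta$ on the upper bound and is absent on the lower bound. A secondary point to justify carefully is why the lower bound retains the clean form $-\omega^{2}d^{2}$ with no analogous $\Delta y_{2,n}$ correction; this relies on the fact --- guaranteed by Proposition~\ref{prop:steering1} --- that the lateral velocity has returned to zero in the new frame by the next step, so that keyframe is a true apex and its $x_{{\rm foot},n}$ term is symmetric to the straight-walking case. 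I would also state explicitly that the derivation assumes a single heading change over the step, matching the keyframe-to-keyframe setting of Definition~\ref{def:keyframe}.
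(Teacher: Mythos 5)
Your proposal is correct and follows essentially the same route the paper intends: the corollary is stated without its own proof precisely because it is the switching-position argument of Proposition~\ref{prop:straight} transplanted into the rotated local frame, with $\dot{x}_{{\rm apex},c}$ becoming $v_{{\rm apex},c}\cos\Delta\theta$ and the new-frame sagittal offset $\Delta y_{2,c}\sin\Delta\theta$ between the current CoM and stance foot producing the cross term $2\Delta y_{2,c}d\sin\Delta\theta$ in the upper bound while the next keyframe, being a true apex in the new frame, leaves the lower bound at $-\omega^{2}d^{2}$. Your bookkeeping of where $\Delta y_{2,c}$ enters and why the two corollaries differ only in the sign of the cross term matches the paper's construction.
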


\begin{cor}
For steering walking in Proposition~\ref{prop:steering1}, similarly, given $d$, $\Delta \theta$, $\Delta y_{2,c}$, and $\omega$, two consecutive apex velocities ought to satisfy the following velocity constraints,
\begin{equation}\label{eq:steering2}
    -\omega^{2}d^{2} \le v_{{\rm apex},n}^{2}-(v_{{\rm apex},c}\cos{\Delta\theta})^{2} \le \omega^{2}d^2_{-}
    \vspace{0.05in}
\end{equation}
where $d^2_{-} = d^{2} - 2\Delta y_{2,c}d\sin{\Delta\theta}$. Note that, parameters $v_{{\rm apex},n}$, $d$, and $\Delta \theta$ in Eqs.~(\ref{eq:straight})-(\ref{eq:steering2}) are the keyframe states.
\end{cor}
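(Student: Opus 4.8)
The plan is to mirror the argument used for Proposition~\ref{prop:straight} and the preceding corollary (Eq.~\eqref{eq:steering1}), adapting it to the opposite steering configuration. As in those proofs, I would start from the analytical phase-space solution in Eq.~\eqref{eqn:analyticalsolution}, applied in the \emph{new} local coordinate frame induced by the heading change $\Delta\theta$, and compute the sagittal walking-step switching position $x_{\rm switch}$ as a function of the two consecutive apex states and foot placements. The one structural difference from straight walking is that, after rotating by $\Delta\theta$, the current state $\boldsymbol{s}_c$ is no longer a sagittal apex in the new frame: its velocity has sagittal component $v_{{\rm apex},c}\cos\Delta\theta$ (and a nonzero lateral component), which is exactly why the bound is written in terms of $v_{{\rm apex},n}^{2}-(v_{{\rm apex},c}\cos\Delta\theta)^{2}$ rather than $v_{{\rm apex},n}^{2}-v_{{\rm apex},c}^{2}$.

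Next I would impose the same viability requirement as in Proposition~\ref{prop:straight}, namely that the switching point lie between the two apex positions, $x_{{\rm apex},c}\le x_{\rm switch}\le x_{{\rm apex},n}$. Expanding this double inequality and collecting the $\omega^{2}$ terms yields a lower bound that, after the geometric substitutions, collapses to $-\omega^{2}d^{2}$ — identical to the straight-walking case and to Eq.~\eqref{eq:steering1}, since the relevant foot-difference term is insensitive to the sign of the turn — and an upper bound of the form $\omega^{2}$ times an effective squared step length.

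The crux of the proof is the geometric bookkeeping that turns this effective squared step length into $d_{-}^{2}=d^{2}-2\Delta y_{2,c}\,d\sin\Delta\theta$. Here I would project the displacement to the next waypoint onto the new sagittal axis: the nominal step $d$ rotated by $\Delta\theta$ together with the lateral apex-to-foot offset $\Delta y_{2,c}$ combine, via the law of cosines on the triangle formed by the current foot, the current apex, and the next apex in the rotated frame, to give $d^{2}\mp 2\Delta y_{2,c}\,d\sin\Delta\theta$. Equation~\eqref{eq:steering1} corresponds to the configuration in which the stance foot lies on the outside of the turn (so the cross term is additive, giving $d_{+}^{2}$); the present corollary is the mirror configuration — the opposite stance foot index $i_{\rm st}$, equivalently a turn toward the stance-foot side — in which the same cross term enters with the opposite sign, yielding $d_{-}^{2}$.

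The main obstacle I anticipate is precisely this sign tracking: one must be careful about the orientation convention for $\Delta\theta$, the sign of $\Delta y_{2,c}=y_{{\rm apex},c}-y_{{\rm foot},c}$ for a left versus right stance foot, and how these interact once the local frame is rotated, so that Eqs.~\eqref{eq:steering1} and \eqref{eq:steering2} are genuinely the two cases of a single viability inequality rather than redundant statements. Everything else — the switching-position formula, the betweenness inequality, and the reduction of the lower bound to $-\omega^{2}d^{2}$ — is a direct transcription of the Proposition~\ref{prop:straight} argument into the rotated coordinates.
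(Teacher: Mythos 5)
Your proposal is correct and matches the derivation the paper intends: the corollary is stated without its own proof precisely because it is the Proposition~\ref{prop:straight} argument (switching position from Eq.~\eqref{eqn:analyticalsolution}, betweenness condition $x_{{\rm apex},c}\le x_{\rm switch}\le x_{{\rm apex},n}$) transcribed into the rotated local frame, with the velocity projection $v_{{\rm apex},c}\cos\Delta\theta$ and the sagittal offset $\Delta y_{2,c}\sin\Delta\theta$ of the current (now non-apex) keyframe producing the cross term $\mp 2\Delta y_{2,c}\,d\sin\Delta\theta$, the two signs corresponding to the two stance-foot/turn-direction configurations. One small refinement: the lower bound stays at $-\omega^{2}d^{2}$ not because the foot-difference term is ``insensitive to the sign of the turn'' but because the \emph{next} keyframe is a genuine apex in the new frame, so $x_{{\rm foot},n}=x_{{\rm apex},n}$ there and the factor $(x_{{\rm apex},c}+x_{{\rm apex},n}-2x_{{\rm foot},n})$ reduces to $-d$ exactly as in straight walking.
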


\section{Keyframe decision-making}
\label{sec:interface}
Given the aforementioned keyframe-based safety criteria for one walking step, we now focus on the keyframe decision-making, an interface between the high-level and low-level planners as seen in Fig.~\ref{fig:frame}. Given a high-level action, the keyframe decision-maker will choose appropriate keyframe states for the motion planner. 
According to the keyframe definition in Def.~\ref{def:keyframe}, we propose the following non-deterministic keyframe transition map. 
\begin{defn}[Transition Map]
A keyframe transition map is non-deterministic and defined as $\boldsymbol{s}_n = T(\boldsymbol{s}_c, \boldsymbol{a})$ where the action $\boldsymbol{a} = (d, \Delta\theta, \Delta z_{\rm foot}, i_{\rm st}, c_{\rm stop})$, the state $\boldsymbol{s}_i=(v_{{\rm apex}, i}, z_{{\rm apex}, i}) \in \mathcal{S}_i$, $i \in \{c, n\}$ denotes current and next walking step indices, respectively.
\end{defn}
The objective of our keyframe decision-maker is: given an action $\boldsymbol{a}$ from the task planner and the current state $\boldsymbol{s}_c$, a transition policy will make a decision on the next state $\boldsymbol{s}_n$.

To define this keyframe transition map, we will first investigate the viability of a keyframe transition map and then use it to design a policy of choosing safe keyframe states and the induced task specifications.

\begin{figure}[t]
\centerline{\includegraphics[width=0.69\columnwidth]{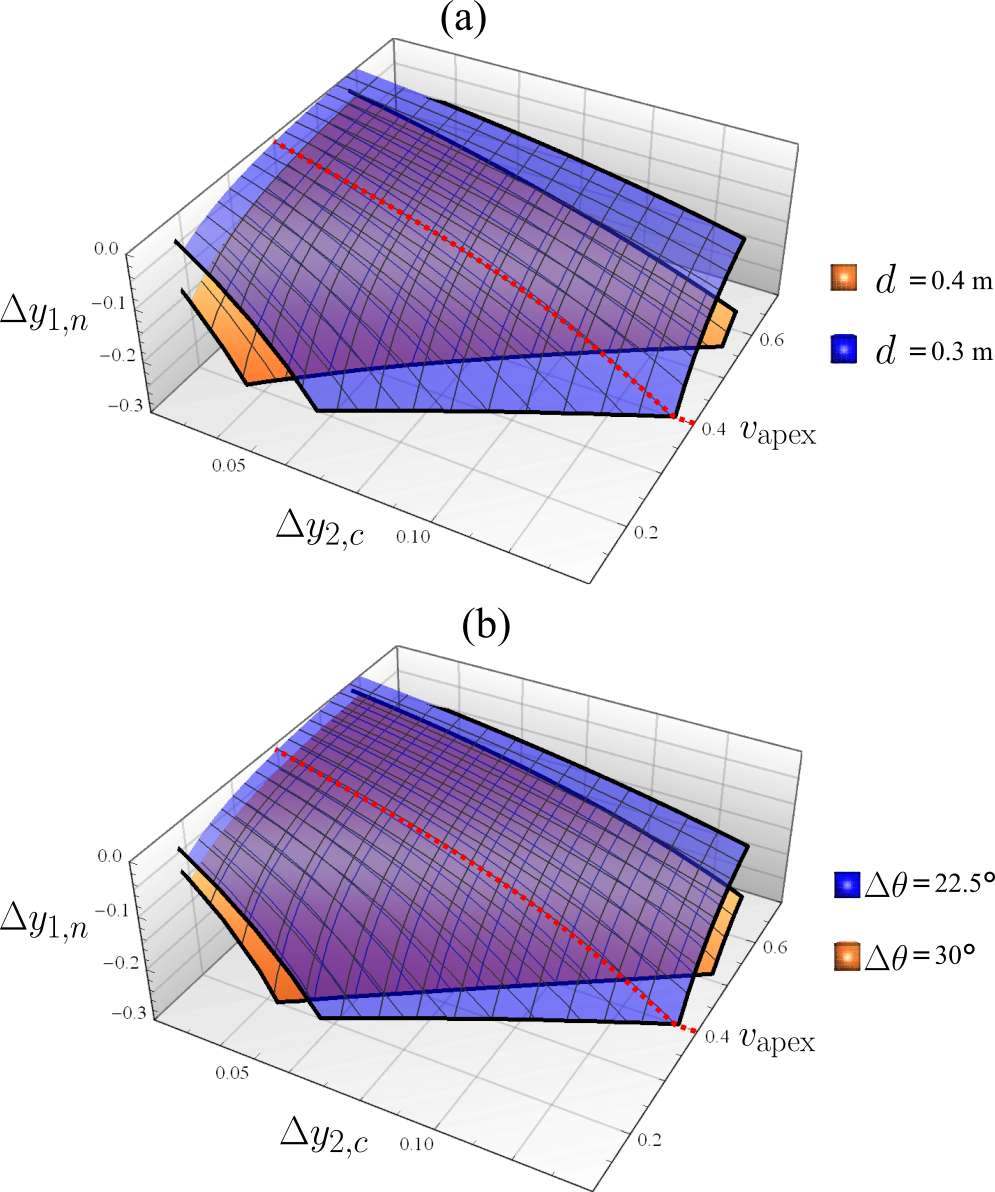}}
\caption{An illustration of keyframe policy design from the viability mapping. Both subfigures (a) and (b) corresponds to steering walking to the right with the right foot in stance. (a) shows $\Delta y_{1,n}$ for steering walking with $\Delta \theta = 22.5^{\circ}$ for two different step lengths $0.3$ and $0.4$ m, as a function of both $v_{\rm {apex}}$ and $\Delta y_{2,c}$. In (b), it shows $\Delta y_{1,n}$ for steering walking with $d$ = 0.3 m for two different heading angle changes ($\Delta \theta$), as a function of both $v_{\rm {apex}}$ and $\Delta y_{2,c}$. In this mapping, $v_{\rm {apex},c}$ and $v_{\rm {apex},n}$ are equal. In both (a) and (b), the blue surface represents a more robust keyframe transition policy, since it allows for a wider range of $v_{\rm apex}$ and $\Delta y_{2,c}$ that yields $\Delta y_{1,n} \in \mathcal{R}_{\Delta y_1}$.  The red dotted line represents an example of viable $\Delta y_{1,n}$ range given $v_{\rm apex} = 0.4$ m/s in the sampled range.}
\label{fig:vk}
\vspace{-0.15in}
\end{figure}

\subsection{Keyframe Transition Map Viability}
\label{subsec:viable transition map}
%


To achieve locomotion transition viability, the CoM trajectory needs to (i) maintain balancing safety in Def.~\ref{def:balance_saf}, and (ii) accurately track the high-level waypoints $\boldsymbol{w}$. As illustrated in Sec.~\ref{subsec:sc}, to maintain balancing safety, the constraint for $\Delta y_2$ in Proposition~\ref{prop:steering1} should be satisfied at each walking step. To this end, we choose $\Delta y_2$ as a safety indicator of the apex state $\boldsymbol{s}$ and analyze how $\Delta y_2$ varies with respect to different keyframe transition maps. Given the safety indicator $\Delta y_{2,c}$ for the current step, the safety indicator $\Delta y_{2,n}$ for the next step is determined uniquely by ($\boldsymbol{s}_c, \boldsymbol{s}_n, \boldsymbol{a}$) and the locomotion dynamics in Eq.~(\ref{eqn:analyticalsolution}). As shown in Proposition \ref{prop:steering1}, $v_{{\rm apex}, c}$ is directly bounded by $\Delta y_{2,c}$. Thus, the viability of the safety indicator represents a precondition of this viability for the keyframe transition map $\boldsymbol{s}_n = T(\boldsymbol{s}_c, \boldsymbol{a})$. To quantify whether the CoM trajectory tracks the high-level waypoints $\boldsymbol{w}$, we use $\Delta y_1$ as a tracking measure. Namely, $\Delta y_1$ needs to be within a bounded range, an additional viability criterion of the transition map.
\begin{thm}
The keyframe transition map $\boldsymbol{s}_n = T(\boldsymbol{s}_c, \boldsymbol{a})$ is viable only if (i) the transition satisfies the safe criteria in Propositions~\ref{prop:steering1}-\ref{prop:straight}, (ii) the safety indicator $\Delta y_{2}$ and tracking indicator $\Delta y_{1}$ are bounded within their respective viable ranges, i.e., $\Delta y_{2} \in \mathcal{R}_{\Delta y_2}$ and $\Delta y_{1} \in \mathcal{R}_{\Delta y_1}$, and (iii) $\Delta y_1$ and $\Delta y_2$ have a matching sign which alternates between consecutive keyframes across two walking steps.
\label{thm:viable_trans}
\end{thm}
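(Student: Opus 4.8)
The plan is to prove the theorem as a \emph{necessary-condition} statement by contraposition: I will show that if \emph{any} of the three conditions (i)--(iii) fails, then the keyframe transition cannot be viable in the sense of Section~\ref{subsec:viable transition map}, i.e.\ it violates either balancing safety (Def.~\ref{def:balance_saf}) or accurate waypoint tracking. Since the theorem only claims ``viable only if,'' I do not need to establish sufficiency; each clause can be handled independently.

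\textbf{Step 1 (clause (i)).} First I would observe that balancing safety for a single walking step is, by definition, the satisfaction of the criteria in Propositions~\ref{prop:steering1}--\ref{prop:straight}. For straight walking, Proposition~\ref{prop:straight} is derived from the requirement that the sagittal step-switching position $x_{\rm switch}$ lie between the two consecutive apex positions; if that requirement fails, either the sagittal CoM cannot climb over the next apex (velocity goes to zero first) or the switch happens outside the valid interval, which is a fall. For steering walking, Proposition~\ref{prop:steering1} (and its corollaries) bound $v_{{\rm apex},c}$ through $\Delta y_{2,c}$ so that the lateral CoM can reach zero lateral velocity at the next apex and the sagittal CoM can cross the next sagittal apex. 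So clause (i) is immediate: viability $\Rightarrow$ balancing safety $\Rightarrow$ Propositions~\ref{prop:steering1}--\ref{prop:straight} hold. This step is essentially a restatement and should be short.

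\textbf{Step 2 (clause (ii)).} Here I would invoke the two indicators introduced in the text: $\Delta y_2$ as the safety indicator and $\Delta y_1$ as the tracking measure. The key point is that $\Delta y_{2,c}$ appears directly in the velocity bounds of Proposition~\ref{prop:steering1}, so for those bounds to define a nonempty admissible velocity interval one needs $\tan\Delta\theta \le 1$ and, more to the point, $\Delta y_{2,c}$ confined to a range $\mathcal{R}_{\Delta y_2}$ for which the geometry of the prismatic-inverted-pendulum swing is physically realizable (leg length limits, foot placement within the support region). Likewise, accurate waypoint tracking is precisely the statement that $\Delta y_1$ — the lateral offset of the apex from the waypoint $\boldsymbol{w}$ — stays within a tolerance band $\mathcal{R}_{\Delta y_1}$; if $\Delta y_1$ leaves that band the CoM trajectory no longer tracks $\boldsymbol{w}$, violating requirement (ii) of locomotion transition viability. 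Using Eq.~\eqref{eqn:analyticalsolution} one shows $\Delta y_{2,n}$ (hence, via Proposition~\ref{prop:steering1}, the admissible next velocity) is determined by $(\boldsymbol{s}_c,\boldsymbol{s}_n,\boldsymbol{a})$, so propagating viability forward forces $\Delta y_2\in\mathcal{R}_{\Delta y_2}$ and $\Delta y_1\in\mathcal{R}_{\Delta y_1}$ at every step.

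\textbf{Step 3 (clause (iii)) — the main obstacle.} The sign-matching and sign-alternation condition is the subtle part and is where I expect to spend the most effort. The idea is geometric: $\Delta y_2 = y_{\rm apex} - y_{\rm foot}$ has a sign determined by which foot is in stance (left vs.\ right), and $i_{\rm st}$ necessarily alternates between consecutive walking steps, so $\Delta y_2$ must flip sign each step. For the transition to keep the apex near the waypoint, $\Delta y_1 = y_{\rm apex} - w_y$ must point ``the same way'' as $\Delta y_2$ at a given keyframe — otherwise the lateral pendulum swing carries the CoM away from, rather than toward, the waypoint line — and since $\Delta y_2$ alternates, $\Delta y_1$ must alternate in lockstep. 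I would make this rigorous by writing the lateral apex-to-apex map from Eq.~\eqref{eqn:analyticalsolution}, expressing $y_{{\rm apex},n}$ as a function of $y_{{\rm apex},c}$, $y_{{\rm foot},c}$ and $v_{{\rm apex},c}$, and checking that a mismatched sign pushes $\Delta y_{1,n}$ outside $\mathcal{R}_{\Delta y_1}$ (contradiction with clause (ii)) or drives $\Delta y_{2}$ out of the Proposition~\ref{prop:steering1} bounds. The delicate bookkeeping is handling the rotated local frame introduced by $\Delta\theta$ (Fig.~\ref{fig:steering_safety}), where the ``apex'' of the current step is no longer an apex in the new coordinates, so the sign argument must be carried out consistently across the frame change; Fig.~\ref{fig:vk} provides the supporting numerical picture. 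Once the three clauses are assembled, the conjunction gives the stated necessary condition.
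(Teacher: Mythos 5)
Your proposal is correct and is essentially a faithful unpacking of the paper's own (very terse) justification: the paper states Theorem~\ref{thm:viable_trans} without a formal proof, relying entirely on the preceding paragraph of Sec.~\ref{subsec:viable transition map} that \emph{defines} transition viability as balancing safety (Def.~\ref{def:balance_saf}) plus waypoint tracking, so your clauses (i) and (ii) follow by definition exactly as you argue. Your geometric argument for clause (iii) --- stance-foot alternation forcing $\Delta y_2$ to flip sign each step, with $\Delta y_1$ required to match it for the lateral oscillation to stay centered on the waypoint line --- goes beyond anything the paper writes down, but it is consistent with Figs.~\ref{fig:steering_safety} and~\ref{fig:vk} and correctly identifies the one clause that needs a real argument rather than a restatement.
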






To obtain the viable keyframe transition map, we sample different keyframe transitions that satisfy Theorem~\ref{thm:viable_trans}. An example of this analysis is shown in Fig. \ref{fig:vk}. For each given action, we sample different combinations of $\Delta y_{2,c}$ and  $v_{{\rm apex}}$, and determine $\Delta y_{1,n}$ consequently. Each point in the 3D figure represents a unique, viable keyframe transition.
We heuristically choose the viable range for $\Delta y_1$ to be $[-0.3, 0.3]$ m to limit the amplitude of the lateral CoM oscillations. In Fig.~\ref{fig:vk}(a), it is observed that steering walking with a shorter step length would allow a wider range of $v_{\rm apex}$ and $\Delta y_{2,c}$. The same conclusion is reached when choosing the heading change angle ($\Delta \theta$) as shown in Fig.~\ref{fig:vk}(b). The analysis shown in Fig.~\ref{fig:vk} is an example of this mapping, and other mappings with different keyframe parameter set-ups are implemented in a similar way but not shown due to space limit. For example, a similar mapping is designed for $\Delta y_2$, to maintain $\Delta y_{2,n}$ to be within the viable range $[-0.2, 0.2]$ m given the Cassie leg configuration.
\subsection{Viability-Kernel-Guided Keyframe Policy}
\label{subsec:keyframe-policy}
As illustrated in the viable keyframe transition map in Sec.\ref{subsec:viable transition map},
given an action $\boldsymbol{a}$ and the current state $\boldsymbol{s}_{c}$, there are multiple choices for the next state $\boldsymbol{s}_{n}$ since the keyframe transition map is non-deterministic. In this subsection, we will design a keyframe policy to choose a deterministic next state $\boldsymbol{s}_{n}$ based on $\boldsymbol{s}_{c}$ and $\boldsymbol{a}$.

\begin{defn}[Locomotion Keyframe Policy]
The keyframe policy is a deterministic keyframe transition map $s_n = \mathcal{P}(s_c, a)$ that satisfies the viable keyframe transition map $s_n = \mathcal{T}(s_c, a)$ while obeying the following set of locomotion properties under different walking scenarios.
\begin{itemize}
   \item For straight, obstacle-free walking on level ground, (i) the apex velocity is continuous within the allowable  \textsf{small}, \textsf{medium}, and \textsf{large} value ranges.\footnote{In this study, we choose $[0.1, 0.3]$ m/s, $[0.3, 0.4]$ m/s, and $[0.4, 0.45]$ m/s as the \textsf{small}, \textsf{medium}, and \textsf{large} value ranges for $v_{\rm apex}$. The granularity in $v_{\rm apex}$ between two consecutive keyframes is $0.05$ m/s.} (ii) The step length ($d$) is fixed to $0.4$ m.\footnote{The step length value during straight walking needs to be a multiple of $0.1$ m to adhere to high-level constraint (See Sec.~\ref{subsec:action_planner}).} Given those specifications, $\Delta y_1$ and $\Delta y_2$ are guaranteed to be within their respective viable ranges.
   \item For straight walking on flat ground with an obstacle appearing in the front, the robot will either come to a stop or switch to the steering walking introduced next.
   \item For steering walking, to guarantee that $\Delta y_{1,n}$ and $\Delta y_{2,n}$ are within their viable ranges and $v_{\rm apex}$ is within the safety region (Proposition \ref{prop:steering1}), our keyframe policy will require (i) a \textsf{small} $v_{\rm apex}$ value, (ii) $\Delta \theta = \pm 22.5^\circ$, (iii) a \textsf{large} step length $d$ when steering in the direction opposite to the foot stance, and (iv) a \textsf{small} $d$ when steering in the direction matching the foot stance.\footnote{$[0.2, 0.3]$ m, $[0.3, 0.4]$ m, and $[0.4, 0.5]$ m are the \textsf{small}, \textsf{medium}, and \textsf{large} value ranges for the step length $d$.}
\end{itemize}
\end{defn}

The properties above imply high-level navigation constraints induced by low-level locomotion dynamics, since the steering ability is constrained by the conditions of $\Delta y_{1} \in \mathcal{R}_{\Delta y_1}$ and $\Delta y_{2} \in \mathcal{R}_{\Delta y_2}$. For example, steering walking with a \textsf{large} $v_{\rm apex}$, may violate Proposition \ref{prop:steering1} or results in $\Delta y_{2,n} \notin \mathcal{R}_{\Delta y_2}$. Similarly, a \textsf{small} step length $d$ and a \textsf{large} $v_{\rm apex}$, may result in $\Delta y_{1,n}$ having the same sign as $\Delta y_{1,c}$, thus accurate tracking of high-level waypoints is not achieved. These properties will be translated into task specifications and embedded in the high-level planner to rule out undesirable actions in the next section.



\section{Task Planning via Belief Abstraction}
\label{sec:task planner}
This section will employ the locomotion keyframe properties above for the high-level task specification design. The goal of our temporal-logic-based task planner is to achieve safe locomotion navigation in a partially observable environment with dynamic obstacles as defined below.
\begin{defn}[Navigation Safety]
Navigation safety is defined as dynamic maneuvering over uneven terrain without falling while avoiding collisions with dynamic obstacles.
\end{defn}



The task planner consists of two components: A high-level navigation planner that plays a navigation and collision avoidance game against the environment on a global coarse discrete abstraction, and an action planner that plays a local navigation game on a fine abstraction of the local environment (i.e., one coarse cell). The action planner generates action sets at each keyframe to achieve the desired coarse-cell transition in the navigation game, which can take multiple walking steps. The reason for splitting the task planner into two layers is that the abstraction granularity required to plan walking actions for each keyframe is too fine to synthesize plans for large environments in a reasonable amount of time.

\subsection{Navigation Planner Design}
The navigation environment is discretized into a coarse two-dimensional grid with a $2.7$ m cell size as shown in Fig. \ref{fig:belief_results}. Each time the robot enters a new cell, the navigation planner evaluates the robot's discrete location ($l_{r,c} \in \mathcal{L}_{r,c}$) and heading ($h_{r,c} \in \mathcal{H}_{r,c}$) on the coarse grid, as well as the dynamic obstacle's location ($l_o \in \mathcal{L}_o$), and determines a desired navigation action ($n_a \in \mathcal{N}_a$). The planner can choose for the robot to stop, or to transition to any reachable safe adjacent cell. $\mathcal{L}_r$ and $\mathcal{L}_o$ denote sets of all coarse cells the robot and dynamic obstacle can occupy, while $\mathcal{H}_{r,c}$ represents the four cardinal directions in which the robot can travel on the coarse abstraction.
Static obstacle locations are encoded into the task specifications. The dynamic obstacle moves under the following assumptions: (a) it will not attempt to collide with the robot when the robot is standing still,
(b) it moves with a fixed speed such that the mobile robot moves to its adjacent coarse cell after each time step, and
(c) it will eventually move out of the way to allow the robot to pass. Assumption (c) prevents a deadlock. 
The task planner guarantees that the walking robot can prevent collisions and achieve a specified navigation goal. 

\subsection{Action Planner Design}
\label{subsec:action_planner}
The local environment, i.e., one coarse cell, is further discretized into a fine abstraction of $26\times 26$ cells. At each walking step, the action planner evaluates the robot's discrete location ($l_{r,f} \in \mathcal{L}_{r,f}$) and heading ($h_{r,f} \in \mathcal{H}_{r,f}$) on the fine grid, as well as the robots current stance foot ($i_{\rm st}$), and determines an appropriate action set $\boldsymbol{a}$.
$\mathcal{H}_{r,f}$ contains a discrete representation of the 16 possible headings the robot could have.
The action planner is responsible for generating a sequence of actions that guarantee that the robot eventually transitions to the next desired coarse cell in the navigation game while ensuring all action sets are safe and achievable based on the current robot and environment states. The key components of the action set are step length ($d \in\{\textsf{small},\textsf{medium},\textsf{large}\}$), heading change ($\Delta\theta \in \Delta\Theta = \{\textsf{left}, \textsf{none}, \textsf{right}\}$), and step height ($\Delta z_{\rm foot} \in \Delta Z_{\rm foot} = \{z_{\rm down2},z_{\rm down1}, z_{\rm flat}, z_{\rm up1},z_{\rm up2}\}$). 
The fine abstraction models the terrain height for each discrete location, allowing the action planner to choose the correct step height $\Delta z_{\rm foot}$ for each keyframe transition. 
The possible heading changes $\Delta\Theta$, correspond to $\{-22.5^\circ, 0^\circ, 22.5^\circ\}$,  are constrained by the minimum number of steps needed to make a $90^\circ$ turn and the maximum allowable heading angle change that results in viable keyframe transitions as defined in Theorem~\ref{thm:viable_trans}. We choose $\Delta\theta = \pm 22.5^\circ$ so that a $90^\circ$ turn can be completed in four steps as can be seen in Fig. \ref{fig:Task_Planner}. Completing the turn in fewer steps is not feasible as it would overly constrain $v_{\rm apex}$, as can be seen in Fig. \ref{fig:vk}(b). 

\subsection{Task Planner Synthesis}
To formally guarantee that the goal locations are reached \textit{infinitely often} while the safety specifications are met, we use General Reactivity of Rank 1 (GR(1)), a fragment of Linear Temporal Logic (LTL). GR(1) allows us to design temporal logic formulas ($\varphi$) with atomic propositions (AP ($\varphi$)) that can either be \textsf{True} ($\varphi \vee \neg\varphi$) or \textsf{False} ($\neg$\textsf{True}). With negation ($\neg$) and disjunction ($\vee$) one can also define the following operators: conjunction ($\wedge$), implication ($\Rightarrow$), and equivalence ($\Leftrightarrow$). There also exist temporal operators ``next" ($\bigcirc$), ``until" ($\mathcal{U}$), ``eventually" ($\diamond$), and ``always" ($\square$). Further details of GR(1) can be found in \cite{gr1}. Our implementation uses the SLUGS reactive synthesis tool \cite{slugs} to design specifications with Atomic Propositions (APs) and natural numbers, which are automatically converted to ones using only APs.


A navigation game structure is proposed by including robot actions in the tuple $\mathcal{G} := (\mathcal{S}, s^{\rm init}, \mathcal{T}_{RO})$ with
\begin{itemize}
    \item $\mathcal{S} = \mathcal{L}_{r,c} \times \mathcal{L}_o \times \mathcal{H}_{r,c} \times \mathcal{A}_n$ is the augmented state;
    \item $s^{\rm init} = (l_{r,c}^{\rm init},l_o^{\rm init},h_{r,c}^{\rm init},n_a^{\rm init}) $ is the initial state;
    \item $\mathcal{T}_{RO} \subseteq \mathcal{S} \times \mathcal{S}$ is a transition relation describing the possible moves of the robot and the obstacle.
\end{itemize}
To synthesize the transition system $\mathcal{T}_{RO}$, we define the rules for the possible successor state locations which will be further expressed in the form of LTL specifications $\psi$. We define the successor location of the robot based on its current state and action $succ_r(l_{r,c},h_{r,c},n_a) = \{l'_{r,c} \in \mathcal{L}_{r,c} | ((l_{r,c},l_o,h_{r,c}),(l'_{r,c},l'_o,h'_{r,c}))\in \mathcal{T}_{RO}\}$. We define the set of possible successor robot actions at the next step as $succ_{n_a}(l_{r,c},l_o,l'_o,h_{r,c}, n_a) = \{ n_a \in \mathcal{N}_a  | ((l_{r,c},l_o,h_{r,c}) , (l'_{r,c},l'_o,h'_{r,c})) \in \mathcal{T}_{RO} \}$. We define the set of successor locations of the obstacle. $succ_o(l_{r,c},l_o,n_a) = \{l'_o \in \mathcal{L}_o | \exists l'_{r,c}, h'_{r,c}. ((l_{r,c},l_o,h_{r,c}) , (l'_{r,c},l'_o,h'_{r,c})) \in \mathcal{T}_{RO} \}$.
Later we will use a belief abstraction inspired from \cite{bharadwaj2018synthesis} to solve our synthesis in a partially observable environment.

The task planner models the robot and environment interplay as a two-player game. The robot action is player 1 while the obstacle is player 2 that is possibly adversarial. The synthesized game guarantees that the robot will always win the game by solving the following reactive problem.

\noindent\textbf{Reactive synthesis problem:}
Given a transition system $\mathcal{T}_{RO}$ and linear temporal logic specifications $\psi$, synthesize a winning strategy for the robot such that only correct decisions are generated in the sense that the executions satisfy $\psi$.

The action planner is synthesized using the same game structure as the navigation planner, with possible states and actions corresponding to Section~\ref{subsec:action_planner}. Since obstacle avoidance is taken care of in the navigation game the obstacle location $\mathcal{L}_o$ and successor function $succ_o$ are not needed for synthesis. Since reactive synthesis is used for both navigation and action planners, the correctness of this hierarchical task planner is guaranteed.


\subsection{Task Planning Specifications}
\label{subsec:nav-spec}
\begin{figure}[t]
\centerline{\includegraphics[width=.4\textwidth]{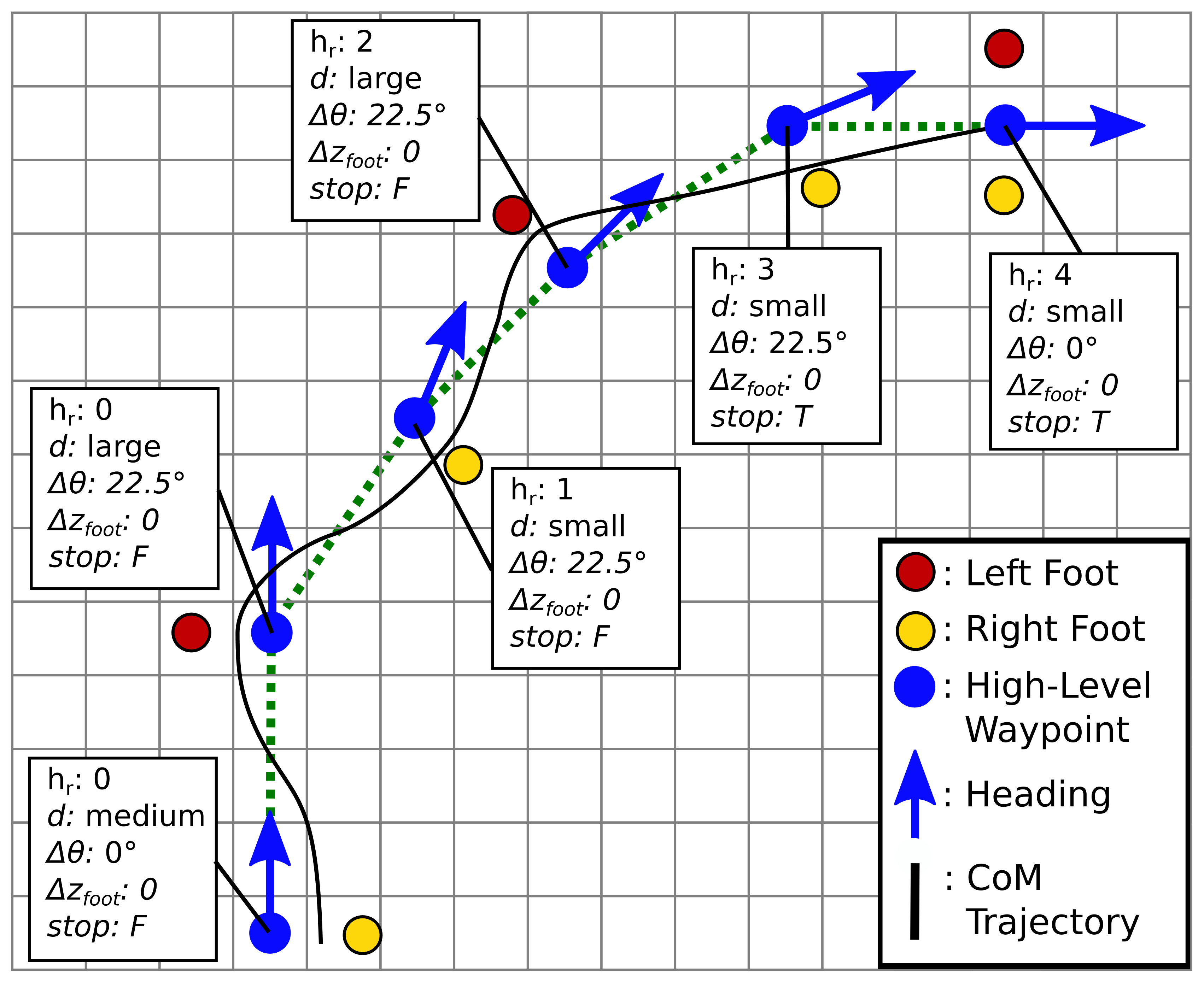}}
\caption{Illustration of fine-level steering walking within one coarse cell.
Discrete actions are planned at each keyframe allowing the robot to traverse the fine grid toward the next coarse cell. A set of locomotion keyframe decisions are also annotated.
}
\label{fig:Task_Planner}
\vspace{-0.1in}
\end{figure}

A set of specifications is needed to describe the possible successor locations and actions ($succ_r$, $succ_{n_a}$, $succ_o$, $succ_{a}$) in the transition system.
To ensure that the $v_{{\rm apex},n}$ safety criteria in Section~\ref{sec:motion planner}-B are met, we introduce a navigation policy that limits $d$ based on $\Delta\theta$, $h_{r,f}$ , and $i_{\rm st}$ in the action planner. The turning strategy ensures that the robot always recovers to the centroid of a cell heading in a cardinal direction ($h_{r,f} = h_{c} \in \{N, E, S, W\}$), this ensures that the same environment transitions happen for a given action and discrete game state. Such a navigation policy is composed through safety specifications governing step length sequences. Here we show an example of these specifications governing the first step of a $90 ^\circ$ turn. Similar specifications exists for other discrete robot states.
\begin{align}\nonumber
        \square \big(( h_{r,f} = h_{c}\wedge ((i_{\rm st} = \textsf{left} \wedge \Delta\theta = \textsf{right}) \\
        \vee (i_{\rm st} = \textsf{right} \wedge \Delta\theta = \textsf{left})) \Rightarrow \bigcirc(d = \textsf{large})\big)
\end{align}
\begin{align}\nonumber
        \square \big(( h_{r,f} = h_{c}\wedge ((i_{\rm st} = \textsf{left} \wedge \Delta\theta = \textsf{left}) \\
        \vee (i_{\rm st} = \textsf{right} \wedge \Delta\theta = \textsf{right})) \Rightarrow \bigcirc(d = \textsf{small})\big)
\end{align}
%
To encode the pickup and drop off task visited infinitely often in the navigation planner, we use two intermediate goal tracking APs $GT_1$ and $GT_2$.
\begin{align}
    (\square \Diamond GT_1) \wedge (\square \Diamond GT_2)
\end{align}
Collision avoidance specifications are also designed but omitted due to space limitations.  

\subsection{Belief Space Planning in Partial Observable Environment}
The navigation planner above synthesizes a reactive, safe game strategy that is always winning in a fully observable environment. However, it is unrealistic to assume that the robot has full knowledge of the environment. We relax this assumption by assigning the robot a visible range within which the robot can accurately identify the obstacle location. To reason about where the out-of-sight obstacle is, we devise an abstract belief set construction method based on the work in \cite{bharadwaj2018synthesis}. Using this belief abstraction, we explicitly track the possible cell locations of the dynamic obstacle, rather than assuming it could be in any non-visible cell.
The abstraction is designed by partitioning regions of the environment into sets of states ($P_e$) and constructing a powerset of these regions ($\mathcal{P}(P_e)$). If the obstacle is in the robot's visible range, its belief state would be a real location in $\mathcal{L}_o$. If it is not in the visible range, then its belief state will be a set of states in $\mathcal{P}(P_e)$ complementing the robot visible region. We define a set of beliefs of obstacle locations as $\mathcal{B}_o = \mathcal{L}_o + \mathcal{P}(P_e) $. Now we define the belief game structure as  $\mathcal{G}_{\rm belief} := (\mathcal{S}_{\rm belief}, s_{\rm belief}^{\rm init}, \mathcal{T}_{\rm belief}, vis)$ with
\begin{itemize}
    \item $\mathcal{S}_{\rm belief} = \mathcal{L}_{r,c} \times \mathcal{B}_o \times \mathcal{H}_{r,c} \times \mathcal{A}_n$;
    \item $s_{\rm belief}^{\rm init} = (l_{r,c}^{\rm init},\{b_o^{\rm init}\},h_{r,c}^{\rm init},n_a^{\rm init})$ is the initial location of the obstacle known a priori;
    \item $\mathcal{T}_{\rm belief} \subseteq  \mathcal{S}_{\rm belief} \times \mathcal{S}_{\rm belief}$ are possible transitions where $((l_{r,c},b_o,h_{r,c},n_a),(l'_{r,c},b'_o,h'_{r,c},n'_a)) \in \mathcal{T}_{\rm belief}$;
    \item $vis : \mathcal{S}_{{\rm belief}} \rightarrow \mathbb{B} $ is a visibility function that maps the state ($l_{r,c}, b_o$) to the boolean $\mathbb{b}$ as \textsf{True} iff $b_o$ is a real location in the visible range of $l_{r,c}$.
\end{itemize}
The successor robot location $l'_{r,c}$ is still defined by $succ_r(l_{r,c},h_{r,c},n_a)$ since the belief of the obstacle location doesn't affect the robot location transitions determined by its current action set. The possible actions at the next step still obey $succ_{n_a}(l_{r,c},l_o,l'_o,h_{r,c},n_a)$ since the dynamic obstacle only affects the possible one-step robot action if it is in the visible range. The set of possible successor beliefs of the obstacle location, $b'_o$, is defined as $succ_{o_b} =\{b'_o \in \mathcal{B}_o | \exists l'_{r,c}, h'_{r,c}. ((l_{r,c},b_o,h_{r,c}) , (l'_{r,c},b'_o,h'_{r,c})) \in \mathcal{T}_{\rm belief} \}$ where $\mathcal{B}'_o \in \mathcal{L}'_o$ when $vis(l_{r,c},l'_o) = \textsf{True}$ and $b'_o \in \mathcal{P}(P_e)$ when $vis(l_{r,c},l'_o) = \textsf{False}$. By correctly specifying the possible successor location of the obstacle based on the current state, the planner is able to reason about how the belief region will evolve and where the obstacle can enter the visible range. 

We did not need to modify $succ_{n_a}$ for the belief game since the action planner remains unchanged. We still incorporate the low-level safety constraints using the same specifications, but allow for a larger set of navigation options than would be possible without tracking the belief of the dynamic obstacle's location. 

\section{Implementation}
\label{sec:results}

We design our coherent planning structure using a combined \textit{top-down} and \textit{bottom-up} strategy. The workflow of our integrated task and motion planner in Fig.~\ref{fig:frame} is: as to the \textit{top-down} strategy, the synthesized navigation and action planners first generate feasible actions based on the navigation and locomotion specifications defined in Section~\ref{subsec:nav-spec}. The keyframe decision-maker then \textit{online} chooses viable keyframe states using the keyframe policy designed in \ref{subsec:keyframe-policy}. Finally, the motion planner generates a locomotion trajectory using the keyframe states. As to the \textit{bottom-up} strategy, properties of the low-level safe keyframe policy are incorporated into the high-level \textit{offline} task planner design. This section evaluates the performance of (i) the high-level task planners by synthesizing a pick-up and drop-off task in a partially observable environment, and (ii) the low-level motion planner by employing our designed keyframe policy to choose proper keyframe states and generating safe locomotion trajectories. The results are simulated using the Drake toolbox \cite{drake} and shown in Fig. \ref{fig:psp}. The code used for implementation is open sourced \href{https://github.com/GTLIDAR/safe-nav-locomotion.git}{\nolinkurl{here}}

\subsection{LTL Task Planning Implementation}

The task planner is evaluated in an environment with multiple static obstacles, one dynamic obstacle, and two rooms with different ground heights separated by a set of stairs. The environment is discretized into a $11\times 5$ coarse grid for navigation planning. 
For action planning, the local environment of each coarse cell is further discretized into a $26\times 26$ fine grid.
Our simulation shows that the robot successfully traverses uneven terrain to complete its navigation goals while steering away from the dynamic obstacle when it appears in the robot's visible range. 

\begin{figure}[t]
\centering
\begin{subfigure}{.5\columnwidth}
  \centering
  \includegraphics[width=.9\linewidth]{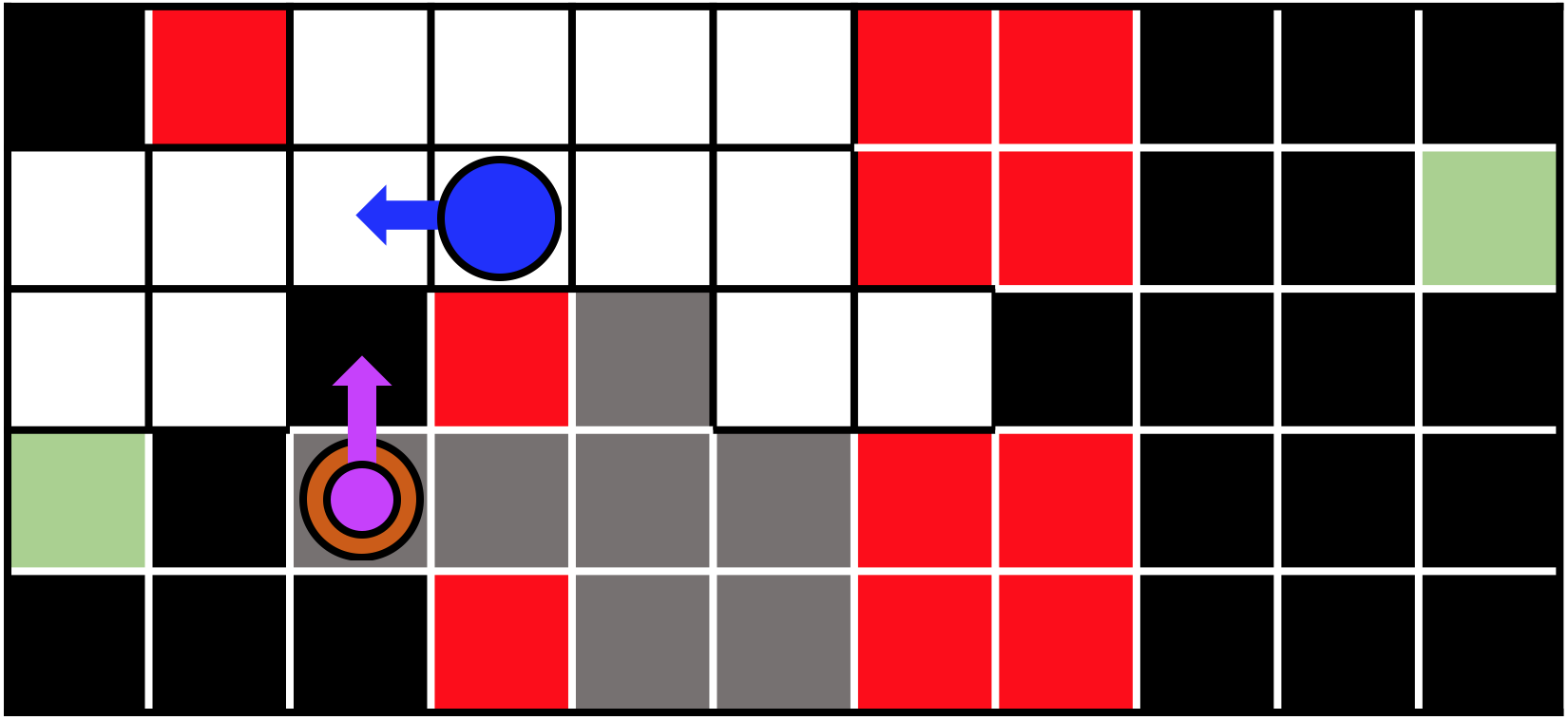}
  \caption{With explicit belief tracking}
  \label{fig:Belief_sub1}
\end{subfigure}%
\begin{subfigure}{.5\columnwidth}
  \centering
  \includegraphics[width=.9\linewidth]{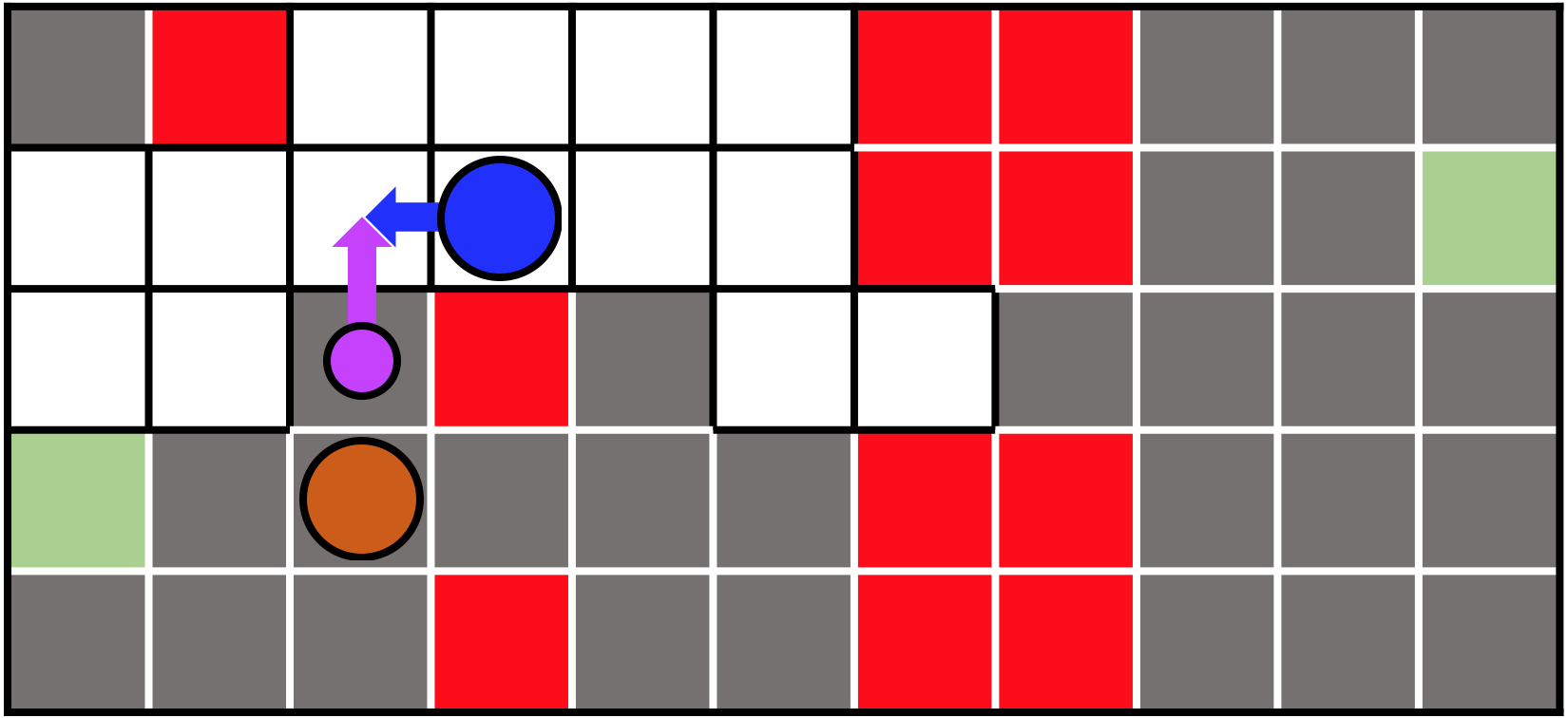}
  \caption{No explicit belief tracking}
  \label{fig:Belief_sub2}
\end{subfigure}
\caption{A snapshot of the coarse-level navigation grid during a simulation where the robot (blue circle) is going between the two goal states (green cells), while avoiding a static obstacle (red cells) and a dynamic obstacle (orange circle). White cells are visible while grey and black cells are non-visible. Gray cells represent the planner's belief of potential obstacle locations. The closest the obstacle could be to the robot, as believed by the planner, is depicted by the pink circle.
}
\label{fig:belief_results}
\vspace{-0.2in}
\end{figure}

The belief structure enables the robot to navigate around obstacles while still guaranteeing that the dynamic obstacle is not in the immediate non-visible vicinity. Fig. (\ref{fig:Belief_sub1}) depicts a snapshot of a simulation where the robot must navigate around such an obstacle to reach its goal states. A successful strategy can be synthesized only when using belief region abstraction.
Without explicitly tracking possible non-visible obstacle locations, the task planner believes the obstacle could be in any non-visible gray cell when it is out of sight. The planner 
is not able to synthesize a strategy that would allow the robot to advance, because it can not guarantee that the obstacle isn't immediately behind the wall.
Fig. (\ref{fig:Belief_sub2}) depicts a potential collision that could occur.
This comparison underlines the significance of the belief abstraction approach.

\begin{figure*}[t]
\centerline{\includegraphics[width=0.9\textwidth]{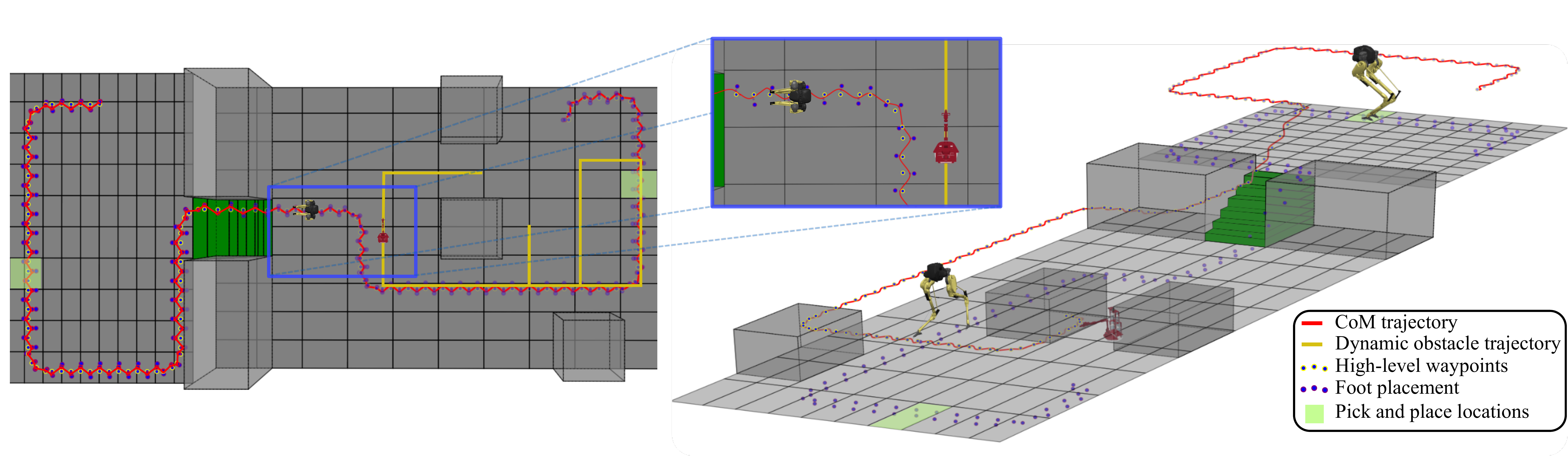}}
\caption{3D simulation of the Cassie robot dynamically navigating in the partially observable environment while avoiding collisions with the mobile robot. Trajectories of Cassie CoM and the moving obstacle as well as Cassie foot placements are illustrated. It has been examined that the navigation trajectory satisfies the proposed locomotion safety and the desired tasks.}
\label{fig:psp}
\vspace{-0.1in}
\end{figure*}

\subsection{Phase-space Locomotion Planner}
The keyframe decision-maker in Section~\ref{subsec:keyframe-policy} determines keyframe states based on the actions from the task planner. It is observed that the proposed keyframe policy guarantees safe locomotion trajectories. The navigation motion is simulated on the Cassie bipedal robot designed by Agility Robotics \cite{agility} in Fig.~\ref{fig:notation}. Cassie's CoM, foot placments and the moving obstacle trajectories are illustrated in Fig.~\ref{fig:psp}. The Cassie walking scenarios include going downstairs, straight walking, stopping, and steering walking. The trajectory satisfies the proposed locomotion safety and the desired navigation task, i.e., the pick and place task at designated locations.


\section{Conclusions}
\label{sec:conclusion}
The proposed task and motion planning framework generated safe locomotion trajectories in a partially observable environment with non-flat terrain. As far as the authors' knowledge, this work takes the first step towards locomotion task and motion planning that incorporates the low-level dynamics into the high-level specification design. This opens up new opportunities for formally designing complex locomotion behaviors reactive to versatile environmental events. 

This framework has the potential to be extended to  more complex environment navigation, such as those with multiple dynamic obstacles or obstacles that move at different speeds.
Our keyframe decision-maker chooses the keyframe policy based on a subset of locomotion dynamics constraints. In the future, we will investigate more rigorous keyframe policy design by exploring extensive locomotion constraints generally.

\section*{Acknowledgment}
The authors would like to express our gratefulness to Suda Bharadwaj and Ufuk Topcu for their discussions on belief abstraction, and Jialin Li for his help in implementing inverse kinematics of the Cassie robot simulation. This work was partially funded by the NSF grant \# IIS-1924978.

\bibliographystyle{IEEEtran}
\bibliography{lidar.bib}

\section*{Appendix}
\label{appendix}
When the CoM motion is constrained within a piece-wise linear surface parameterized by $h = a(x -x_{{\rm foot}})+h_{\rm apex}$, the reduced-order model becomes linear and an analytical solution exists:
\begin{align}
    \label{eqn:appendix1}
    &p(t) = Ae^{\omega t} + Be^{-\omega t} + p_{\rm foot} \\
    \label{eqn:appendix2}
    &\dot{p}(t) = \omega (Ae^{\omega t} - Be^{-\omega t})
\end{align}
where
\begin{align}
    & \omega = \sqrt{\frac{g}{h_{\rm apex}}} \\
    & A = \frac{1}{2}((p_{0}-p_{\rm foot})+\frac{\dot{p}_{0}}{\omega} \\ 
    & B = \frac{1}{2}((p_{0}-p_{\rm foot})-\frac{\dot{p}_{0}}{\omega}
\end{align}
manipulate Eq. (\ref{eqn:appendix1})-(\ref{eqn:appendix2}) gives
\begin{equation}
    p+\frac{\dot{p}}{\omega}-p_{\rm foot}=2Ae^{\omega t}
\end{equation}
which renders
\begin{equation}\label{eqn:appendix3}
    t=\frac{1}{\omega}\log(\frac{p+\frac{\dot{p}}{\omega}-p_{\rm foot}}{2A})
\end{equation}

To find the dynamics, $\dot{p} = f(p)$, which will lead to the switching state solution, remove the $t$ term by plugging Eq. (\ref{eqn:appendix3}) into Eq. (\ref{eqn:appendix1}).
\begin{equation}
    \frac{1}{2}(p-\frac{\dot{p}}{\omega}-p_{\rm foot})= \frac{2AB}{p+\frac{\dot{p}}{\omega}-p_{\rm foot}}
\end{equation}
\begin{equation}
    (p-p_{\rm foot})^{2}-(\frac{\dot{p}}{\omega})^{2}=4AB
\end{equation}
which yields
\begin{equation}
    \dot{p}=\pm \sqrt{\omega^{2}((p-p_{\rm foot})^{2}-(p_{0}-p_{\rm foot})^{2})+\dot{p}_{0}^{2}}
\end{equation}

If the apex height is constant, then $\omega$ is constant. According to the constrain that sagittal velocity should be continuous, the saggital switching position is obtained by 
\begin{equation}
    x_{\rm switch} = \frac{1}{2}(\frac{C}{x_{{\rm foot}, n}-x_{{\rm foot}, c}}+(x_{{\rm foot}, c}+x_{{\rm foot}, n}))
\end{equation}
where
\begin{equation}
\begin{split}
    C = (x_{{\rm apex},c}-x_{{\rm foot},c})^{2}-(x_{{\rm apex},n}-x_{{\rm foot}, n})^{2} \\ 
    + \frac{\dot{x}_{{\rm apex},n}^{2}-\dot{x}_{{\rm apex},c}^{2}}{\omega^{2}}
\end{split}
\end{equation}

\end{document}